\documentclass{article}
\usepackage{hyperref}

\usepackage{graphicx}
\usepackage{amsmath}
\usepackage{amssymb}
\usepackage{subfigure}
\usepackage{booktabs}

\usepackage{url}            
\usepackage{amsfonts}       
\usepackage{nicefrac}       
\usepackage{microtype}      

\usepackage{makecell}
\usepackage{amsthm}
\usepackage{mathtools}
\usepackage{bm}
\usepackage[table,xcdraw]{xcolor}
\usepackage{multicol}
\usepackage{colortbl}
\usepackage{multirow}
\usepackage{enumitem}
\usepackage{bbding}
\usepackage{color}
\usepackage{dsfont}
\usepackage{algorithm}
\usepackage{listings}
\usepackage{natbib}
\usepackage[capitalize,noabbrev]{cleveref}

\usepackage{etoolbox}
\makeatletter
\AfterEndEnvironment{algorithm}{\let\@algcomment\relax}
\AtEndEnvironment{algorithm}{\kern2pt\hrule\relax\vskip3pt\@algcomment}
\let\@algcomment\relax
\newcommand\algcomment[1]{\def\@algcomment{\footnotesize#1}}
\renewcommand\fs@ruled{\def\@fs@cfont{\bfseries}\let\@fs@capt\floatc@ruled
  \def\@fs@pre{\hrule height.8pt depth0pt \kern2pt}%
  \def\@fs@post{}%
  \def\@fs@mid{\kern2pt\hrule\kern2pt}%
  \let\@fs@iftopcapt\iftrue}
\makeatother

\theoremstyle{plain}
\newtheorem{theorem}{Theorem}[section]
\newtheorem{proposition}[theorem]{Proposition}

\theoremstyle{definition}
\newtheorem{definition}[theorem]{Definition}

\theoremstyle{remark}

\usepackage[textsize=tiny]{todonotes}

\usepackage[table]{xcolor}

\definecolor{ourscolor}{HTML}{E8E8FF}

\definecolor{basegray}{HTML}{F5F5F5}  

\definecolor{ForestGreen}{RGB}{34,139,34}

\renewcommand{\paragraph}[1]{\medskip\noindent\textbf{#1.~}}

\newcommand{\bmc}{\bm{c}}

\newcommand{\bmf}{\bm{f}}

\newcommand{\bmq}{\bm{q}}

\newcommand{\bmV}{\bm{V}}

\definecolor{medievalred}{RGB}{120,30,40}

\newcommand{\modelname}{%
  \scalebox{1.1}{%
    {\fontfamily{pzc}\selectfont
    \textbf{%
      \textcolor{medievalred}{Holmes}%
    }}%
  }%
}

\usepackage[accepted]{icml2026}

\icmltitlerunning{Revisiting Uncertainty: On  Evidential Learning for Partially Relevant Video Retrieval}

\begin{document}

\twocolumn[
\icmltitle{Revisiting Uncertainty:\\ On  Evidential Learning for Partially Relevant Video Retrieval}

\begin{icmlauthorlist}
\icmlauthor{Jun Li}{thusz,pcl}
\icmlauthor{Peifeng Lai}{hitsz}
\icmlauthor{Xuhang Lou}{hitsz}
\icmlauthor{Jinpeng Wang}{hitsz}
\icmlauthor{Yuting Wang}{}\\
\icmlauthor{Ke Chen}{pcl}
\icmlauthor{Yaowei Wang}{hitsz,pcl}
\icmlauthor{Shu-tao Xia}{thusz,pcl}
\end{icmlauthorlist}

\icmlaffiliation{thusz}{Tsinghua Shenzhen International Graduate School, Tsinghua University, Shenzhen, China}
\icmlaffiliation{hitsz}{Harbin Institute of Technology, Shenzhen, China}
\icmlaffiliation{pcl}{Peng Cheng Laboratory, Shenzhen, China}

\icmlcorrespondingauthor{Jinpeng Wang}{wangjp26@gmail.com}

\icmlkeywords{Machine Learning, ICML}

\vskip 0.3in
]

\printAffiliationsAndNotice{}  

\begin{abstract}
Partially relevant video retrieval aims to retrieve untrimmed videos using text queries that describe only partial content. 
However, the inherent asymmetry between brief queries and rich video content inevitably introduces  \emph{uncertainty} into the retrieval process. 
In this setting, vague queries often induce \emph{semantic ambiguity} across videos, a challenge that is further exacerbated by the \emph{sparse temporal supervision} within videos, which fails to provide sufficient matching evidence.
To address this, we propose \modelname{}, a hierarchical evidential learning framework that aggregates multi-granular cross-modal evidence to quantify and model uncertainty explicitly. 
At the \textbf{inter-video} level, similarity scores are interpreted as evidential support and modeled via a Dirichlet distribution. Based on the proposed three-fold principle, we perform fine-grained query identification, which then guides query-adaptive calibrated learning.
At the \textbf{intra-video} level, to accumulate denser evidence, we formulate a soft query-clip alignment via flexible optimal transport with an adaptive dustbin, which alleviates sparse temporal supervision while suppressing spurious local responses. Extensive experiments demonstrate that \modelname{}  outperforms state-of-the-art methods. Code is released at \href{https://github.com/lijun2005/ICML26-Holmes}{ICML26-Holmes}.
\end{abstract}    
\section{Introduction}
\label{sec:intro}
\begin{figure*}[t]
    \centering
    \includegraphics[width=\textwidth]{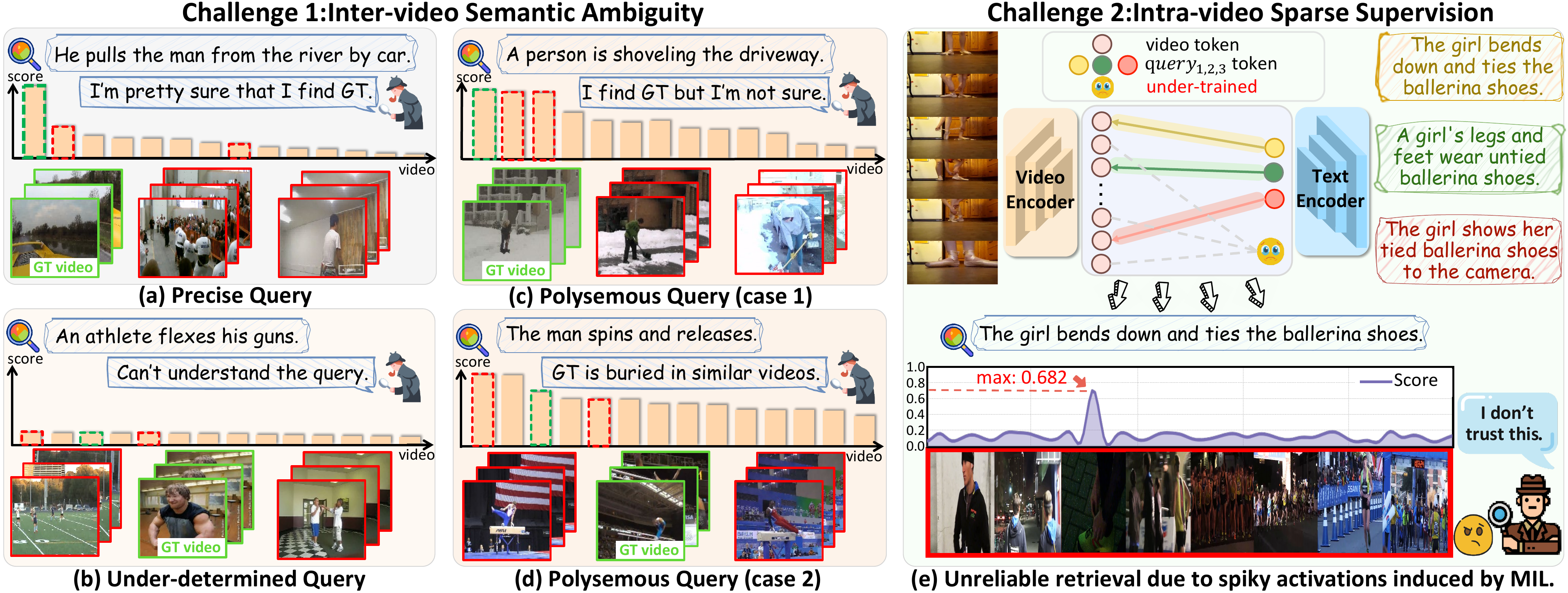}
    \caption{(a) \emph{Precise queries} produce a clear response margin, with the ground-truth video distinctly outperforming all candidates.
    (b) \emph{Under-determined queries} are poorly interpreted, resulting in weak and non-discriminative similarity responses.
    (c-d) \emph{Polysemous queries} are semantically ambiguous and correspond to multiple plausible candidates,  necessitating \emph{more consideration} to account for the associated uncertainty.
    (e) Optimizing only the closest pair, MIL yields \emph{sparse supervision} and under-trained clips, making the model vulnerable to local noise from a globally irrelevant ``running" video, causing spurious spiky activations and unreliable retrieval.
    }
    \label{fig:Intro}
\end{figure*}

With the rapid growth of online videos, Text-to-Video Retrieval (T2VR) \cite{PIG} has received increasing attention. While most T2VR methods are designed for pre-trimmed videos with fully descriptive queries, they may deviate from real-world settings, where videos are generally untrimmed and queries correspond only to partial segments. Motivated by this discrepancy, \emph{Partially Relevant Video Retrieval} (PRVR) \cite{ms-sl} was proposed for retrieving untrimmed videos given partially relevant queries.
However, this task faces a fundamental challenge rooted in data asymmetry: compared with rich video content, text queries are often brief and vague, inevitably introducing inherent uncertainty into the retrieval process. 

Modeling this uncertainty involves two compounded challenges that exist at both inter- and intra-video levels:
(\textbf{i}) \textbf{Inter-video Semantic Ambiguity.} Owing to the inherent uncertainty, queries often exhibit fine-grained heterogeneity. As illustrated in \cref{fig:Intro}(a-d), beyond ideal \emph{precise queries}, real-world data contains \emph{under-determined queries} (insufficient semantics yielding low confidence), and \emph{polysemous queries} (ambiguous semantics matching multiple candidates). Most prior methods ignore this heterogeneity \cite{GMMFORMER} or handle it implicitly via regularization \cite{RAL,msc-prvr}. While ARL \cite{ARL} attempts to detect ambiguous video-text pairs, it lacks a mechanism to explicitly quantify the specific type of uncertainty.
(\textbf{ii}) \textbf{Intra-video Sparse Supervision.} The inter-video ambiguity is also exacerbated by the lack of reliable evidence within the video. The prevalent multi-instance learning (MIL) paradigm supervises only the single best-matching clip, leading to \emph{sparse supervision}. As shown in \cref{fig:Intro}(e), this supervisory imbalance provides limited training signals and  prevents the model from learning a holistic representation, making it vulnerable to spurious local noise. While distilling cues from CLIP \cite{CLIP} can alleviate sparsity \cite{DL-DKD}, such frame-level fragments fail to capture the temporal dynamics required to confirm a video's relevance, thereby fueling the uncertainty at the inter-video level and resulting suboptimal results.

To address these challenges, we argue that the model needs to go beyond deterministic similarity scores and should evaluate the uncertainty of its predictions. In this paper, we propose \modelname{}, a unified evidential learning framework that operates at dual levels to disentangle and model uncertainty.
First, to handle \textbf{inter-video uncertainty}, we treat cross-modal similarities as evidence parameterized by a Dirichlet distribution. This allows us to move beyond simple matching scores and perform a \emph{diagnosis} of the query type. We devise an Uncertainty Guided Identification (UGI) strategy based on a three-fold principle, namely \emph{epistemic uncertainty}, \emph{label consistency}, and \emph{aleatoric uncertainty}, to categorize queries and apply query-adaptive label calibration.
Second, to resolve the \textbf{intra-video sparsity} that fuels uncertainty, we seek to accumulate denser evidence. We formulate a Flexible Optimal Transport (FOT) scheme to establish soft alignments between the query and multiple relevant clips. To ensure robustness against noise, we integrate a \emph{dustbin bucket} to absorb irrelevant clips, thereby providing more comprehensive and reliable temporal supervision.

Extensive experiments on ActivityNet Captions \cite{krishna2017dense}, Charades-STA \cite{gao2017tall}, and TVR \cite{lei2020tvr} demonstrate that \modelname{} achieves state-of-the-art performance. Beyond standard retrieval metrics, our analysis reveals that \modelname{} effectively interprets query ambiguity and suppresses spurious MIL activations.

Our main contributions are summarized as follows:
\vspace{-.2cm}
\begin{itemize}
    \item[$\bullet$] \emph{\textbf{Conceptual}}: We provide a fresh perspective on PRVR by revisiting the problem through the lens of uncertainty, explicitly linking inter-video semantic ambiguity with intra-video sparse supervision. \vspace{-.2cm}
    \item[$\bullet$] \emph{\textbf{Methodological}}: We propose \modelname{}, a dual-level evidential learning framework. It features a three-fold principle for fine-grained query identification and  label calibration, coupled with Flexible Optimal Transport to enable dense and robust temporal alignment.\vspace{-.2cm}
    \item[$\bullet$] \emph{\textbf{Empirical}}: \modelname{} achieves significant improvements on three benchmarks. Detailed ablation studies further validate its capability in quantifying query uncertainty and effectively suppressing spurious local noise.
\end{itemize}
\section{Related Works}
\label{sec:related_works}
\noindent \textbf{Partially Relevant Video Retrieval} \quad 
Unlike traditional Text-to-Video Retrieval (T2VR) methods \cite{Video-ColBERT,PIG,BLiM}, which retrieve fully relevant pre-trimmed videos, Partially Relevant Video Retrieval (PRVR) \cite{,dreamprvr,sdm-prvr,bgmnet,amdnet,PEAN}, first introduced by MS-SL \cite{ms-sl}, targets untrimmed videos where only partial segments match the query. Video Corpus Moment Retrieval (VCMR) \cite{zhang2025video,jsg} is another widely studied retrieval task that aims to localize specific moments within a large video corpus. PRVR is similar to the first stage of VCMR, as both conduct coarse video-level retrieval.
 But VCMR  requires moment-level annotations, which hinders scalability.

Existing PRVR methods largely ignore semantic ambiguity and primarily focus on clip modeling. MS-SL \cite{ms-sl,MS-SL++} constructs redundant clip embeddings via a sliding window scheme, whereas ProtoPRVR \cite{ProtoPRVR} accelerates retrieval by generating a small set of prototypes. GMMFormer \cite{GMMFORMER,GMMFormerV2} implicitly models clips through Gaussian attention.
While RAL \cite{RAL} leverages probabilistic embeddings and MSC-PRVR \cite{msc-prvr} alleviates semantic collapse through auxiliary training losses to implicitly model query ambiguity, ARL \cite{ARL} explicitly identifies ambiguous query-video pairs. Nevertheless, these methods fail to distinguish fine-grained query categories and overlook the sparse supervision induced by multiple instance learning, which leads to suboptimal retrieval performance. We propose \modelname{}, which identifies query heterogeneity through inter-video evidential learning and establishes soft query-clip associations via intra-video evidential learning, providing denser supervisory signals.

\noindent \textbf{Uncertainty Learning} \quad 
Despite their success, deep neural networks remain largely deterministic and lack explicit mechanisms for uncertainty quantification. Early attempts to estimate uncertainty primarily relied on Bayesian Neural Networks (BNNs) \cite{BNN} and Monte Carlo Dropout \cite{montedropout2}. More recently, Evidential Deep Learning (EDL) \cite{edl-2018}, grounded in Dempster-Shafer Theory (DST) \cite{dst} and Subjective Logic (SL) \cite{sl}, has attracted increasing attention 
 by explicitly modeling uncertainty through second-order probability distributions and has been successfully applied to various tasks, \textit{e.g.}, multi-view classification \cite{tmc,rcml}, noisy correspondence multi-modal retrieval \cite{rule,tcl,decl,ugncl,ucpm}, temporal action localization \cite{chenlo1,chenlo2}, 
open-set action recognition \cite{osa1,osa2} and zero-shot learning \cite{crest}, 
with ongoing efforts \cite{fisher-edl,r-edl,flex-edl} further improving the capability of uncertainty estimation of EDL. 
Motivated by these advances, we design a hierarchical evidential learning framework that aggregates multi-granularity cross-modal evidence to jointly quantify retrieval uncertainty in PRVR.
\section{Method}
\subsection{Problem Statement and Overview} \label{subsec:overview}
\begin{figure*}[t]
    \centering
    \includegraphics[width=\textwidth]{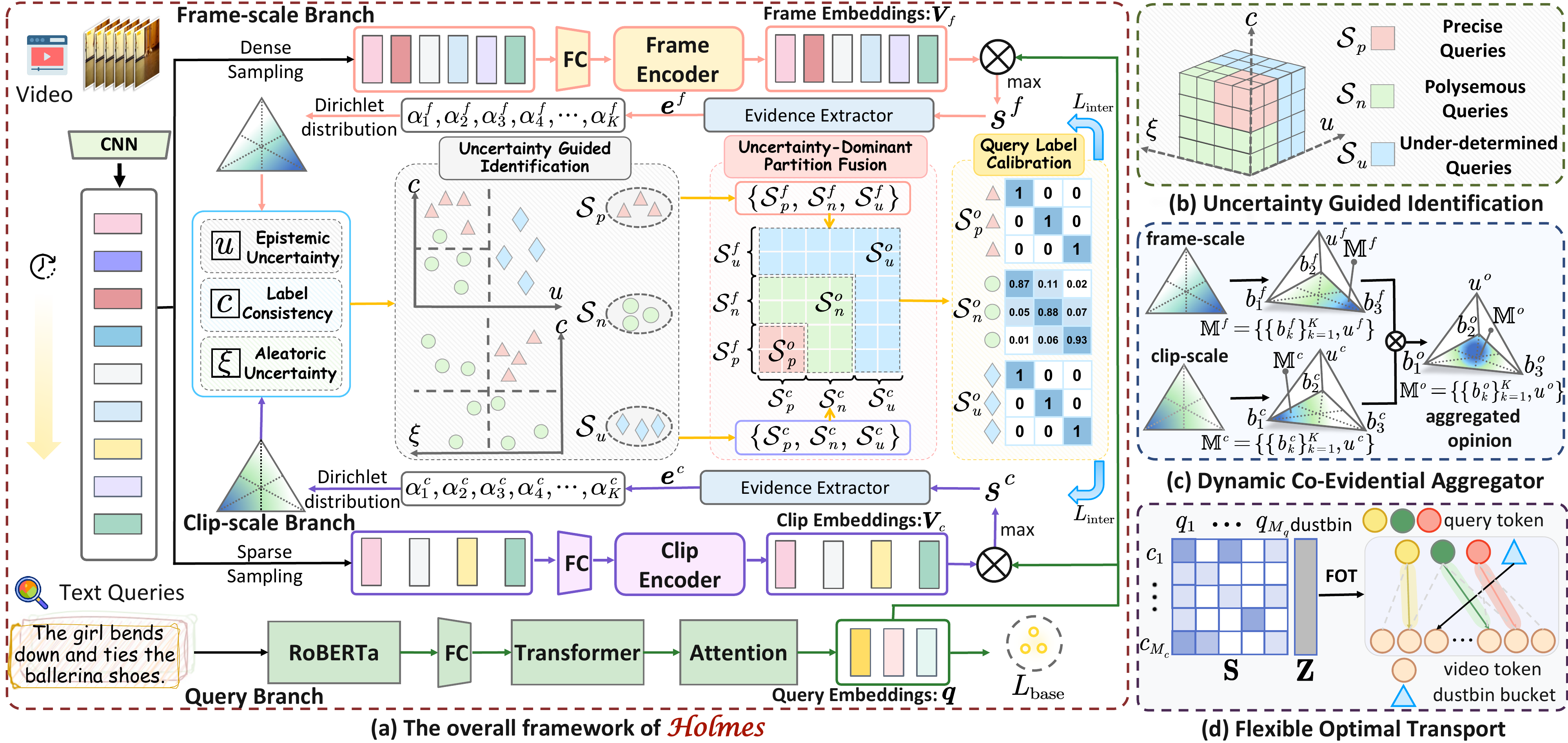}
    \caption{Overview of \modelname{}. 
\textbf{(a)} The query branch produces the query embedding $\bmq$, while the frame- and clip-scale branches extract video representations $\bmV_f$ and $\bmV_c$, yielding similarity scores $S_f$ and $S_c$. 
The candidate similarities are converted into evidence vectors and modeled with Dirichlet distributions to estimate the three-fold principle, based on which Uncertainty-Guided Identification partitions queries at each scale into precise ($\mathcal{S}^f_p$, $\mathcal{S}^c_p$), polysemous ($\mathcal{S}^f_n$, $\mathcal{S}^c_n$), and under-determined ($\mathcal{S}^f_u$, $\mathcal{S}^c_u$) subsets. 
The dual-scale partitions are then integrated by Uncertainty-Dominant Partition Fusion to obtain the final results $\mathcal{S}^o_p$, $\mathcal{S}^o_n$, and $\mathcal{S}^o_u$. 
Finally, after query-specific label calibration, the refined labels supervise inter-video evidential learning via $L_{\mathrm{inter}}$.
\textbf{(b)} Uncertainty-Guided Identification  partitions queries in a three-dimensional latent space through coarse separation on the $c$--$u$ plane followed by refinement along $\xi$. 
\textbf{(c)} Dynamic Co-Evidential Aggregator fuses dual-scale evidential opinions in a parameter-free manner. 
\textbf{(d)} Flexible Optimal Transport employs an \emph{dustbin bucket} to absorb noisy clips, establishing reliable soft query-clip assignments as label evidence for intra-video evidential learning.
    }
    \label{fig:arc}
\end{figure*}
Partially Relevant Video Retrieval aims to retrieve videos  that contain  moments relevant to a given text query.
Each video consists of multiple candidate moments,
while each query corresponds to a specific moment within its relevant video. 
In this paper, we propose \modelname{}, a hierarchical evidential learning framework for addressing query ambiguity in PRVR at both inter-video and intra-video levels, as shown in \cref{fig:Intro}(a). We first introduces  three components below.

\noindent \textbf{Text Query Representation}  \quad 
A text query is first encoded by a pre-trained RoBERTa~\cite{liu2019roberta}, followed by a projection layer and a Transformer~\cite{vaswani2017attention} to obtain contextualized representations, which are aggregated into the final $\bmq \in \mathbb{R}^d$ similar to MS-SL \cite{ms-sl}.

\noindent \textbf{Video Representation} \quad  Given an untrimmed video, we first extract  features using a pre-trained CNN. Following prior arts \cite{GMMFORMER}, we adopt a dual-branch architecture to capture multi-scale  representations.
In the frame-scale branch, $M_f$  sampled frames are projected to a $d$-dimensional space via a fully connected layer, followed by frame encoder to obtain 
$\bm{V_f} = \{\bmf_i\}_{i=1}^{M_f} \in \mathbb{R}^{M_f \times d}$.
In the clip-scale branch, the video is sparsely segmented into $M_c$ clips, which are first projected and then processed by clip encoder to produce  $\bm{V_c} = \{\bmc_i\}_{i=1}^{M_c}  \in \mathbb{R}^{M_c \times d}$.

\noindent \textbf{Similarity Computation} \quad
To compute the similarity of a text–video pair $(T, V)$, we first obtain $\bmq$, $\bm{V_f}$, and $\bm{V_c}$ and compute frame- and clip-scale similarity scores:
\vspace{-.6em}
\begin{equation}
\begin{aligned}
s^f(T, V) = \text{max} \{ \text{cos}(\bmq, \bmf_1),..., \text{cos}(\bmq, \bmf_{M_f})\}, \\
s^c(T, V) = \text{max} \{\text{cos}(\bmq, \bmc_1),..., \text{cos}(\bmq, \bmc_{M_c})\}.
\label{eq:simscore}
\end{aligned}
\vspace{-.6em}
\end{equation}
We then compute the overall text-video  similarity score:
\vspace{-.6em}
\begin{equation}\label{eq:sim}
s({T}, {V}) = \alpha_f s^f({T}, {V}) + \alpha_c s^c({T}, {V}),
\vspace{-.6em}
\end{equation}
where $\alpha_f, \alpha_c \in [0,1]$ and $\alpha_f + \alpha_c = 1$. Videos are finally retrieved and ranked according to \cref{eq:sim}.

\subsection{Inter-video Evidential Learning}
This section models query ambiguity via query uncertainty estimation with inter-video evidential learning. We first introduce the three-fold principle for query identification (\cref{fig:arc}(a)) and describe the framework using the frame-scale branch, as both branches share similar pipelines.

\noindent \textbf{Three-fold Principle} \quad
For query $\bmq_i$ and  $K$ videos, we first compute the frame-scale similarity $s^f_{ij}$ with the $j$-th video (\cref{eq:simscore}), denoted as $s_{ij}$ for convenience. 
Evidence is derived via  similarities following \citep{edl-2018}:
\vspace{-.8em}
\begin{equation}
    e_{ij} = \exp\left(\tanh\left(s_{ij}/ \tau\right)\right), \quad 
    \boldsymbol{e}_i = [e_{i1}, \dots, e_{iK}],
    \vspace{-.8em}
    \label{ea:evi1}
\end{equation}
where $\tau=0.1$ by default. According to Dempster-Shafer Theory (DST) \cite{dst}, epistemic uncertainty can be quantified by evidence and reflect how strongly the data support the association between a query and candidates.
We first map the evidence ${e}_i$ to Dirichlet parameters:
\vspace{-.8em}
\begin{equation}
    {\boldsymbol \alpha}_i = [\alpha_{i1}, \dots, \alpha_{iK}], \quad \alpha_{ij} = e_{ij}+1.
    \label{eq:dir-alpha}
    \vspace{-.6em}
\end{equation}
\begin{definition}\textbf{Epistemic Uncertainty.} For query $\bmq_i$, epistemic uncertainty $u_i$ and belief mass $b_{ij}$ are defined as
\vspace{-1.em}
\begin{equation}
    u_i = \frac{K}{S_i}, \quad b_{ij} = \frac{\alpha_{ij}-1}{S_i}, \quad 
    S_i = \sum_{j=1}^{K} \alpha_{ij},
    \label{eq:dir-alpha2}
\vspace{-1.em}
\end{equation}
with $u_i + \sum_{j} b_{ij} = 1$. Here, $S_i$ denotes the Dirichlet distribution strength and ${\boldsymbol b}_i = [b_{i1}, \dots, b_{iK}]$ represents the subjective opinion derived from ${\boldsymbol \alpha}_i$.
\vspace{-.8em}
\end{definition}
High $u_i$ captures under-determined queries by reflecting limited evidence and weak query comprehension, whereas low $u_i$ does not ensure precise retrieval.
Formally,
\vspace{-.6em}
\begin{theorem}
A low epistemic uncertainty $u_i$ does not imply that the highest belief is assigned to the annotated label ${y}_{i}$.
\vspace{-.8em}
\begin{equation}
    \boldsymbol q_i \text{ with low } u_i \nRightarrow \arg\max\, \boldsymbol{b}_{i} = \arg\max \boldsymbol{y}_{i}.
\vspace{-.8em}
\end{equation}
\label{theorem: wrong uncertainty}
\end{theorem}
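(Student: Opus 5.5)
The claim is a non-implication, so a single counterexample suffices. We will exhibit a query $\bmq_i$ whose epistemic uncertainty $u_i$ is as small as the construction permits, yet whose largest belief mass falls on a video other than the annotated one. The key observation is that $u_i = K/S_i$ depends only on the total strength $S_i=\sum_j \alpha_{ij}$. This sum is invariant under permuting the similarity scores $s_{i1},\dots,s_{iK}$. By contrast, $\arg\max \boldsymbol b_i$ is determined by which entry is largest: since $b_{ij}=e_{ij}/S_i$ and $e_{ij}=\exp(\tanh(s_{ij}/\tau))$ is strictly increasing in $s_{ij}$, we have $\arg\max \boldsymbol b_i=\arg\max_j s_{ij}$.

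We will carry this out in three steps. First, we fix any similarity vector for which $u_i$ is low. To make ``low'' concrete, we take $s_{ij}$ close to $1$ for all $j$. Then each $e_{ij}\approx \exp(\tanh(10))\approx e$, so $S_i\approx K(e+1)$ and $u_i\approx 1/(e+1)$, which is the minimum attainable value given the bounded evidence map. Second, we choose the annotation $y_i$ to point to a video $j^\star$ that does not have the strictly largest score. For example, we set $s_{ij^\star}=1-\delta$ and $s_{ik}=1$ for some $k\neq j^\star$, so that $b_{ik}>b_{ij^\star}$ by strict monotonicity. Third, we check that $u_i$ remains essentially unchanged. Continuity of $S_i$ in $\boldsymbol s_i$ gives this, as does the permutation argument: swapping the scores of $j^\star$ and $k$ leaves $u_i$ exactly the same. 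Hence for any threshold defining ``low'' uncertainty that is attained by some query, there is a query at that same uncertainty level whose top belief mismatches $\boldsymbol y_i$.

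To strengthen the conclusion beyond a degenerate tie, we will also note the general mechanism: $u_i$ measures only the \emph{total} evidence and not its \emph{allocation}. A query can therefore produce strong evidence (low $u_i$) concentrated on a wrong candidate, which is exactly the polysemous case of \cref{fig:Intro}(c--d). This shows why a second criterion, label consistency, is needed.

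The main obstacle is interpretive rather than technical. ``Low $u_i$'' is not quantified in the statement, and because $\tanh$ saturates, $u_i$ is bounded below by $K/(K(e+1))$ and cannot be made arbitrarily small. We will handle this by reading ``low'' relative to the attainable range, meaning any value achieved by a correctly retrieved query. The permutation-invariance argument then gives an exact counterexample at every such level, with no limiting argument required.
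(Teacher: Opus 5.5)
Your argument is correct, but it takes a different route from the paper. The paper uses an abstract evidence vector. It sets $e_{ik}=\lambda$ on a wrong video $k\neq t$ and $e_{ij}=0$ elsewhere, so $S_i=\lambda+K$ and $u_i=K/(\lambda+K)\to 0$ as $\lambda\to\infty$, while $\arg\max\boldsymbol b_i=k$. This is a limiting counterexample in the generic EDL setting, where ``low'' can mean arbitrarily close to $0$.

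Strictly, that construction cannot occur under the paper's own evidence map $e_{ij}=\exp(\tanh(s_{ij}/\tau))$. With $s_{ij}\in[-1,1]$ and $\tau=0.1$, every $e_{ij}$ lies in $[e^{-\tanh 10},e^{\tanh 10}]$, so $u_i\ge 1/(1+e^{\tanh 10})\approx 0.27$. Neither $e_{ij}=0$ nor $\lambda\gg 0$ is reachable.

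You stay inside that map instead. Since $u_i=K/S_i$ depends only on the multiset of scores, while $\arg\max\boldsymbol b_i=\arg\max_j s_{ij}$ by strict monotonicity, you can relabel (or swap scores) at any attained uncertainty level. This gives a mismatch at exactly the same $u_i$, with no limit.

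The paper's version is shorter and matches the familiar picture of strong evidence piled on the wrong class. Yours is faithful to the actual parameterization and handles the saturation explicitly. It also proves slightly more: a misranked query can sit at exactly the same $u_i$ as a correctly retrieved one, so no threshold on $u_i$ alone can separate them. That is exactly the motivation for label consistency.

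Two small points need care, and you already flag the first. The swap needs a strict maximum: at the exact minimum of $u_i$, where all $s_{ij}=1$, the beliefs are tied. Also, $1/(e+1)$ is an infimum rather than an attained value; the attained minimum is $1/(1+e^{\tanh 10})$. Neither affects the conclusion.
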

\vspace{-2em}
See Appendix \ref{appendix:theorem3.2} for the proof. $\boldsymbol{y}_i = [y_{i1}, \dots, y_{iK}]$ is the one-hot ground-truth label. This shows that epistemic  uncertainty cannot ensure belief concentration on the annotated video, motivating our label consistency principle.
\begin{definition} \textbf{Label Consistency.} For query $\bmq_i$, the label consistency $c_{i}$ is defined as:
\vspace{-.6em}
    \begin{equation}
        c_{i}=\max (0, \boldsymbol{s}_{i}\cdot \boldsymbol{y}_{i}),
        \label{eq: cons}
\vspace{-.6em}
    \end{equation}
where $\boldsymbol{s}_i = [s_{i1}, \dots, s_{iK}]$ denotes similarity scores.
\vspace{-.6em}
\end{definition}
While low $c_i$ indicates poor alignment with the ground-truth, high $c_i$ do not indicate query precision. Formally,
\begin{proposition}
A high label consistency $c_i$ does not imply that the query is precise.
\vspace{-.6em}
\begin{equation}
    \boldsymbol q_i \text{ with high } c_i \nRightarrow\boldsymbol q_i  \text{ is precise query}.
    \vspace{-.8em}
\end{equation}
\label{theorem:wrong-consistency}
\vspace{-2em}
\end{proposition}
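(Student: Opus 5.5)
The proposition is a non-implication, so the plan is to exhibit one explicit counterexample. We need a query $\bmq_i$ whose label consistency $c_i$ is high but which is not precise. Following \cref{fig:Intro}, we take ``precise'' to mean that the ground-truth video receives a clear response margin over all other candidates. We formalize this as $s_{ij^*} - \max_{j\neq j^*} s_{ij} \ge \delta$ for some margin $\delta>0$, where $j^*=\arg\max \boldsymbol{y}_i$. Equivalently, in belief terms, $\arg\max \boldsymbol{b}_i = j^*$ with a strict gap. It then suffices to build similarity vectors with $c_i$ close to its maximal value while this margin condition fails.

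\textbf{Construction.} First observe that $c_i=\max(0,\boldsymbol{s}_i\cdot\boldsymbol{y}_i)=\max(0,s_{ij^*})$, since $\boldsymbol{y}_i$ is one-hot. Thus $c_i$ depends only on the ground-truth coordinate and places no constraint on the other $K-1$ similarities. Fix any value $s_{ij^*}=\gamma$ close to $1$ (cosine similarities lie in $[-1,1]$), so that $c_i=\gamma$ is high.

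Next, set some other coordinate $s_{ik}=\gamma'$ with $\gamma'\ge\gamma$. One version takes $\gamma'=\gamma$, which gives a polysemous query as in \cref{fig:Intro}(c--d). A stronger version takes $\gamma'>\gamma$, so that a wrong video is ranked first.

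To see the effect on beliefs, note that the map $s\mapsto \exp(\tanh(s/\tau))$ of \cref{ea:evi1} is strictly increasing. Hence $e_{ik}\ge e_{ij^*}$, and by \cref{eq:dir-alpha2} also $b_{ik}\ge b_{ij^*}$. The margin is therefore non-positive, so $\bmq_i$ is not precise. The remaining coordinates can be set to arbitrary values, for example equal to $\gamma$ as well, which yields an even more ambiguous query.

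Finally, I will check that nothing depends on $K$, $\tau$, or the exact high-$c_i$ threshold. For any threshold $t<1$, we may choose $\gamma\in(t,1]$ and $\gamma'=\gamma$, and the same argument goes through.

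\textbf{Main obstacle.} The only delicate point is that ``precise query'' is defined only informally, via \cref{fig:Intro}(a). The work lies in fixing a formalization that is faithful to the paper, namely a strict belief or similarity margin for the annotated video, and in making the counterexample robust across reasonable variants. The tie $\gamma'=\gamma$ already defeats any strict-margin definition. The case $\gamma'>\gamma$ defeats even the weakest definition, $\arg\max\boldsymbol{b}_i=j^*$, mirroring the argument behind \cref{theorem: wrong uncertainty}. Optionally, I would add a numerical instance, for example $K=3$, $\boldsymbol{s}_i=(0.9,0.9,0.1)$ with $j^*=1$, giving $c_i=0.9$ and tied top beliefs.
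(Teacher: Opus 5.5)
Your proposal is correct and uses the same idea as the paper. The paper's proof simply points to \cref{fig:Intro}(c): the annotated video can respond strongly, making $c_i=\max(0,s_{ij^*})$ high, while other candidates respond similarly high, so there is no margin. Your explicit construction, for instance $\boldsymbol{s}_i=(0.9,0.9,0.1)$, together with the strict monotonicity of $s\mapsto\exp(\tanh(s/\tau))$, makes that figure-based counterexample rigorous under your margin-based formalization of ``precise''.
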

\vspace{-.6em}
\begin{proof} \cref{fig:Intro}(c) shows that a high response from the ground-truth video does not imply a  precise match, as other videos may exhibit similarly high similarity.
\vspace{-1em}
\end{proof}
To distinguish precise and polysemous queries, we introduce aleatoric uncertainty \cite{au} measured by the expected entropy of $\text{Dir}({\boldsymbol\alpha_i})$ in \cref{eq:aleatoric-uncertainty}. Low entropy indicates concentration of probability mass on a single video, suggesting precise queries, while high entropy reflects a more uniform distribution, indicating polysemous queries.

\begin{definition} \textbf{Aleatoric Uncertainty.} 
For query $\bmq_i$, aleatoric uncertainty $\xi_i$ is defined as
\vspace{-.6em}
\begin{equation}
\!\!\!\!\mathbb{E}_{\mathbf{p} \sim \text{Dir}({\boldsymbol\alpha_i})} [H(\mathbf{p})]\!\!=\!\!\sum_{k=1}^K \frac{\alpha_{ik}}{S_{i}} \left( \psi(S_{i} \!+ \!1) \!-\! \psi(\alpha_{ik}\! + \!1) \right),
\vspace{-.6em}
\label{eq:aleatoric-uncertainty}
\end{equation}
where $\psi$ denotes the digamma function, defined as $\psi(x) = \frac{d}{d x} \log \Gamma(x)$ and $H$ denotes the Shannon entropy.
Please refer to Appendix \ref{appendix:aleatoric-uncertainty} for the full derivation.
\vspace{-.6em}
\end{definition}

\noindent \textbf{Uncertainty Guided Identification} \quad  \label{method:ugi} Based on the above  principles, queries can be categorized into three types: 
(i) under-determined queries with high epistemic uncertainty, 
$\mathcal{S}_{u}$; 
(ii) initially precise queries with low epistemic uncertainty and high label consistency, 
$\mathcal{\tilde{S}}_{p}$; 
(iii) initially polysemous queries with low epistemic uncertainty and low label consistency, 
$\mathcal{\tilde{S}}_{n}$, 
which can be formally expressed as
\vspace{-.6em}
\begin{equation}
\begin{cases} 
\mathcal{S}_{u} = \{q_i \mid u_i > \beta_u\}, \\ 
\mathcal{\tilde{S}}_{p} = \{q_i \mid u_i \le \beta_u,\, c_i \ge \beta_p\}, \\
\mathcal{\tilde{S}}_{n} = \{q_i \mid u_i \le \beta_u,\, c_i < \beta_p\}.
\end{cases}    
\label{eq:initial-division}
\vspace{-.6em}
\end{equation}
The thresholds $\beta_u$ and $\beta_p$ are adaptively determined as
\vspace{-.6em}
\begin{equation}
    \beta_u = \min(u^{tp}, 1 - \beta), \quad 
    \beta_p = \max(\beta, c^{tp}),
    \label{eq:threshold}
\vspace{-.6em}
\end{equation}
where $u^{tp}$=$\max_{i \in \mathcal{S}^{tp}} u_i$, 
$c^{tp}$ = $\min_{i \in \mathcal{S}^{tp}} c_i$ and $\beta=0.3$. $\mathcal{S}^{tp}= \left\{ i \,\middle|\, \arg\max(\boldsymbol{s}_i) = \arg\max(\boldsymbol{y}_i) \right\}$
denotes the correctly matched query–video pairs. $\mathcal{\tilde{S}}_{n}$ is often treated as noisy pairs in noisy correspondence learning. However, we interpret  $\mathcal{\tilde{S}}_{n}$ as reflecting query ambiguity, where ambiguous semantic may weaken responses to ground-truth videos. Proposition \ref{theorem:wrong-consistency} shows that $\mathcal{\tilde{S}}_{p}$ still contain polysemous queries.  We further refine $\mathcal{\tilde{S}}_{p}$ by comparing $\xi$, yielding the final precise queries $\mathcal{S}_{p}$ and polysemous queries $\mathcal{S}_{n}$:
\vspace{-.6em}
\begin{equation}
\begin{cases}
\mathcal{S}_p = \{ q_i \in \mathcal{\tilde{S}}_p \mid \xi_i < \text{median}(\{\xi_j\}_{q_j \in \mathcal{\tilde{S}}_p}) \}, \\
\mathcal{S}_n = \bigl( \mathcal{\tilde{S}}_p \cup \mathcal{\tilde{S}}_n \bigr) \setminus \mathcal{S}_p.
\end{cases}
\label{eq:refined_partition}
\vspace{-.6em}
\end{equation}
\noindent\textbf{Uncertainty-Dominant Partition Fusion} \quad 
Based on \cref{eq:initial-division,eq:refined_partition}, the frame-scale and clip-scale branches independently yield query partitions
$\{\mathcal{S}_u^f, \mathcal{S}_p^f, \mathcal{S}_n^f\}$ and
$\{\mathcal{S}_u^c, \mathcal{S}_p^c, \mathcal{S}_n^c\}$.
However, these two branches may yield inconsistent predictions for the same query, motivating a principled mechanism to reconcile discrepancies.

\begin{definition}\textbf{Uncertainty Dominance Ordering.}
\label{def:uncertainty_order}
An \emph{uncertainty dominance ordering} over query categories is
\vspace{-.6em}
\begin{equation}
    \mathcal{S}_p \;\prec\; \mathcal{S}_n \;\prec\; \mathcal{S}_u ,
    \vspace{-.6em}
\end{equation}
which induces a total order according to increasing semantic ambiguity and uncertainty.
\vspace{-.5em}
\end{definition}
This ordering is motivated by the observation that precise queries are semantically reliable, polysemous queries admit multiple candidates,
whereas under-determined queries lack sufficient evidence for confident matching.
Accordingly, cross-branch conflicts are resolved by selecting the identification with higher uncertainty dominance:
\vspace{-.6em}
\begin{equation}
\begin{cases}
\mathcal{S}^o_p = \mathcal{S}_p^f \cap \mathcal{S}_p^c, \\
\mathcal{S}^o_n = (\mathcal{S}_n^f \cap \mathcal{S}_p^c)
               \;\cup\; (\mathcal{S}_p^f \cap \mathcal{S}_n^c)
               \;\cup\; (\mathcal{S}_n^f \cap \mathcal{S}_n^c), \\
\mathcal{S}^o_u= \begin{aligned}[t]
  & \bigl(\mathcal{S}_u^f \cap \mathcal{S}_u^c\bigr) \cup \bigl(\mathcal{S}_u^f \cap (\mathcal{S}_p^c \cup \mathcal{S}_n^c)\bigr) \\
  & \cup \bigl( \mathcal{S}_u^c \cap (\mathcal{S}_p^f \cup \mathcal{S}_n^f)\bigr). \\
\end{aligned}
\end{cases}
\label{eq:influence_fusion}
\vspace{-.6em}
\end{equation}
\noindent \textbf{Query Label Calibration} \quad
Given the final $\mathcal{S}^o_p$, $\mathcal{S}^o_n$, and $\mathcal{S}^o_u$, we  perform query-specific label calibration. For precise queries in $\mathcal{S}^o_p$, the original labels are preserved.
For under-determined queries in $\mathcal{S}^o_u$, which remain under-fitted due to insufficient semantic evidence, we  retain the original labels to encourage learning.
In contrast, for polysemous queries in $\mathcal{S}^o_n$, one-hot supervision overlooks  semantic ambiguity, 
potentially over-penalizing semantically related samples and introducing misleading supervision. 
We recalibrate their labels by using model-estimated similarities to yield soft alignment probabilities that capture latent relevance.
Given the $i$-th query with label $y_i$, the refined label $\hat{\boldsymbol{y}}_i$ is:
\vspace{-.5em}
\begin{equation}
   \!\!\! \hat{\boldsymbol{y}}_i \!\!=\!\!
    \begin{cases}
        \boldsymbol{y}_i, \!\!& \!\!i \in \mathcal{S}^o_p \cup \mathcal{S}^o_u, \\
        \!(\!1-\gamma\!) \boldsymbol{y}_i
        + \dfrac{\gamma}{2}\!\left(\sigma(s^f_i) + \sigma(s^c_i)\right), \!\!
        & \!\!i \in \mathcal{S}^o_n ,
    \end{cases}
    \label{eq:ref_label}
\vspace{-.5em}
\end{equation}
where $\sigma(\cdot)$ denotes the softmax operation and $\gamma$ = 0.2. $s^f_i$ and $s^c_i$ are the  frame-scale and clip-scale similarity scores.

\noindent \textbf{Dynamic Co-Evidential Aggregator} \quad As is shown in \cref{fig:arc} (c), for a query $\bmq$, we first obtain the frame-scale and clip-scale evidential opinions $\mathbb{M}^f = \big\{\{b_k^f\}_{k=1}^{K}, u^f\big\}$ 
$\mathbb{M}^c = \big\{\{b_k^c\}_{k=1}^{K}, u^c\big\}$ according to
\cref{eq:dir-alpha,eq:dir-alpha2}.
The two branch opinions are then fused using the Dempster–Shafer Theory combination rule, yielding
$\mathbb{M}^o = \big\{\{b_k^o\}_{k=1}^{K}, u^o\big\}$:
\vspace{-.5em}
\begin{equation}
b^o_{k}=\frac{1}{1-\delta}( b^f_kb^c_k + b^f_ku^c + b^c_ku^f), u^o = \frac{1}{1-\delta}u^fu^c,
\vspace{-.5em}
\end{equation}
where $\delta = \sum_{i\neq j}{b^f_i b^c_j}$ quantifies conflict between the two opinions.
This unified opinion could capture the holistic uncertainty of the query and serve as a global supervisory signal for coordinating the two branches.

With multi-scale evidential opinions, query categorization, and label calibration established, we introduce the evidential learning objective. We formulate cross-modal retrieval as a $K$-way classification problem and optimize \modelname{} by minimizing the least-squares loss between the Dirichlet-based query probability $\boldsymbol{p}_i$ and the calibrated label $\hat{\boldsymbol{y}}_i$, defined as:
\vspace{-.6em}
\begin{equation}
   \!\!\! \begin{aligned}
        {L}_{U}(\boldsymbol{\alpha}_{i},{\hat{\boldsymbol y}}_{i})&=\int\left\|\hat{\boldsymbol y}_i-\boldsymbol{p}_i\right\|_2^2 \frac{1}{B\left(\boldsymbol{\alpha}_i\right)} \prod_{j=1}^{K} p_{i j}^{\alpha_{i j}-1} d \boldsymbol{p}_i \\
        & = \sum_{j=1}^{K} \left(\hat{y}_{i j}-\frac{\alpha_{i j}}{S_i}\right)^2+\frac{\alpha_{i j}\left(S_i-\alpha_{i j}\right)}{S_i^2\left(S_i+1\right)}.
    \end{aligned}
    \label{eq:evi-loss}
\vspace{-.6em}
\end{equation}
Additional details are provided in Appendix \ref{appendix:equation17}. \cref{eq:evi-loss} enforces higher evidence for matched pairs than for mismatched ones, thereby preserving reliable uncertainty learning. Finally, for the $i$-th query, the overall objective of inter-video evidential learning is defined as:
\vspace{-.6em}
\begin{equation}
    {L}_{\mathrm{inter}} =
    {L}^f_{U}\!\left(\boldsymbol{\alpha}^{f}_{i}, \hat{\boldsymbol{y}}_{i}\right)
    + {L}^c_{U}\!\left(\boldsymbol{\alpha}^{c}_{i}, \hat{\boldsymbol{y}}_{i}\right)
    + {L}^o_{U}\!\left(\boldsymbol{\alpha}^{o}_{i}, \hat{\boldsymbol{y}}_{i}\right),
\vspace{-.6em}
\end{equation}
where $\boldsymbol{\alpha}^{f}_{i}$, $\boldsymbol{\alpha}^{c}_{i}$, and $\boldsymbol{\alpha}^{o}_{i}$ are the Dirichlet parameters from the frame-scale, clip-scale, and aggregated evidential opinions.

\begin{table*}[t]
\caption{\textbf{Retrieval performance comparison on three standard datasets.} Models are ranked by ascending SumR on ActivityNet Captions. State-of-the-art results are shown in bold, while “–” indicates unavailable entries.}
\label{tab:main}
\centering
\resizebox{\textwidth}{!}{
\begin{tabular}{l ccccc c ccccc c  ccccc}
\toprule
\multirow{2}{*}{\textbf{Model}}& \multicolumn{5}{c}{\textbf{ActivityNet Captions}} &  & \multicolumn{5}{c}{\textbf{Charades-STA}}&& \multicolumn{5}{c}{\textbf{TVR}} \\
\cmidrule{2-6} \cmidrule{8-12} \cmidrule{14-18}
& \textbf{R@1} & \textbf{R@5} & \textbf{R@10} & \textbf{R@100} & \textbf{SumR} && \textbf{R@1} & \textbf{R@5} & \textbf{R@10} & \textbf{R@100} & \textbf{SumR} & & \textbf{R@1} & \textbf{R@5} & \textbf{R@10} & \textbf{R@100} & \textbf{SumR} \\
\midrule
\midrule

\rowcolor{basegray}
\multicolumn{18}{l}{\scalebox{1.15}{\textit{Text-to-Video Retrieval (T2VR)}}}\\
RIVRL \cite{dong2022reading} & 5.2 & 18.0 & 28.2 & 66.4 & 117.8 &&1.6 & 5.6 & 9.4 & 37.7  & 54.3 && 9.4 & 23.4 & 32.2 & 70.6 & 135.6  \\
CLIP4Clip \cite{luo2022clip4clip} & 5.9 & 19.3 & 30.4 & 71.6 & 127.3 &&1.8 & 6.5 & 10.9 & 44.2 & 63.4 &&  9.9 & 24.3 & 34.3 & 72.5 & 141.0\\
Cap4Video  \cite{wu2023cap4video} & 6.3 & 20.4 & 30.9 & 72.6 & 130.2 &&1.9 & 6.7 & 11.3 & 45.0 & 65.0 &&  10.3 & 26.4 & 36.8 & 74.0 & 147.5\\
\midrule

\rowcolor{basegray}
\multicolumn{18}{l}{\scalebox{1.15}{\textit{Video Corpus Moment Retrieval (VCMR)}}}\\
ReLoCLNet \cite{zhang2021video} & 5.7 & 18.9 & 30.0 & 72.0 & 126.6 &&1.2 & 5.4 & 10.0 & 45.6 & 62.3 & & 10.0 & 26.5 & 37.3 & 81.3  & 155.1 \\
CONQUER \cite{hou2021conquer}&  6.5 & 20.4 & 31.8 & 74.3  & 133.1 &&1.8 & 6.3 & 10.3 & 47.5  & 66.0 &&11.0 & 28.9 & 39.6 & 81.3  & 160.8 \\ 
JSG \cite{jsg}&  6.8 & 22.7 & 34.8 & 76.1  & 140.5 &&2.4 & 7.7 & 12.8 & 49.8  & 72.7 &&- & - & - & -  & -\\ 
\midrule

\rowcolor{basegray}
\multicolumn{18}{l}{\scalebox{1.15}{\textit{Partially Relevant Video Retrieval (PRVR)}}}\\
MS-SL \cite{ms-sl}& 7.1 & 22.5 & 34.7 & 75.8 & 140.1 &&1.8 & 7.1 & 11.8 & 47.7 & 68.4& & 13.5 & 32.1 & 43.4 & 83.4 & 172.4       \\
MS-SL++ \cite{MS-SL++}& 7.0 & 23.1 & 35.2 & 75.8 & 141.1 && 1.8 & 7.6 & 12.0 & 48.4 & 69.7 && 13.6 & 33.1 & 44.2 & 83.5 & 174.5\\
 PEAN \cite{PEAN}&7.4 & 23.0 & 35.5 & 75.9 & 141.8 &&  \textbf{2.7} & 8.1 & 13.5 & 50.3 & 74.7 && 13.5 & 32.8 & 44.1 & 83.9 & 174.2 \\
  LH \cite{lh}& 7.4 & 23.5 & 35.8 & 75.8 & 142.4 &&2.1 & 7.5 & 12.9 & 50.1 & 72.7& & 13.2 & 33.2 & 44.4 & 85.5 & 176.3 \\
 BGM-Net \cite{bgmnet}& 7.2 & 23.8 & 36.0 & 76.9 & 143.9 &&1.9 & 7.4 & 12.2 & 50.1 & 71.6 && 14.1 & 34.7 & 45.9 & 85.2 & 179.9 \\
 GMMFormer \cite{GMMFORMER}&8.3 & 24.9 & 36.7 & 76.1 & 146.0 && 2.1 & 7.8 & 12.5 & 50.6 & 72.9 &&13.9 & 33.3 & 44.5 & 84.9 & 176.6 \\
 MamFusion \cite{mamfusion} &8.0 & 25.4 & 37.2 & 76.8 & 147.4 && 2.0 & 8.8 & 14.2 & 51.5 & 76.5 &&14.2 & 33.9 & 44.9 & 84.5 & 177.5 \\
 ProtoPRVR \cite{ProtoPRVR} & 7.9 & 24.9 & 37.2 & 77.3 & 147.4 && - & - & - & - & - && 15.4 & 35.9 & 47.5 & 86.4 & 185.1\\
 DL-DKD \cite{DL-DKD}& 8.0 & 25.0 & 37.5 & 77.1 & 147.6 &&- & - & - & - & - && 14.4 & 34.9 & 45.8 & 84.9 & 179.9  \\
ARL \cite{ARL}& 8.3 & 24.6 & 37.4 & 78.0 & 148.3 && - & - & - & - & - && 15.6 & 36.3 & 47.7 & 86.3 & 185.9\\
MGAKD \cite{MGAKD}& 7.9 & 25.7 & 38.3 & 77.8 & 149.6 && - & - & - & - & - && 16.0 & 37.8 & 49.2 & 87.5 & 190.5\\
DL-DKD++ \cite{dl-dkd++} & 8.3 & 25.5 & 38.3 & 77.8 & 149.9 && 1.9 & 7.1 & 12.3 & 49.8 & 71.1 && 15.3 & 36.0 & 47.5 & 86.0 & 184.8\\
GMMFormerV2 \cite{GMMFormerV2}& 8.9 & 27.1 & 40.2 & 78.7 & 154.9 && 2.5 & 8.6 & 13.9 & 53.2 & 78.2 && 16.2 & 37.6 & 48.8 & 86.4 & 189.1\\
HLFormer \cite{HLFormer}& 8.7 & 27.1 & 40.1 & 79.0 & 154.9 && 2.6 & 8.5 & 13.7 & 54.0 & 78.7 && 15.7 & 37.1 & 48.5 & 86.4 & 187.7\\
DreamPRVR \cite{dreamprvr} & 8.7 & 27.5 & 40.3 & 79.5 & 156.1 && 2.6 & 8.7 &14.5 &\textbf{54.2} &80.0 && 17.4 & 39.0 & 50.4 & 86.2 & 193.1\\
\rowcolor{ourscolor}
 \textbf{{\large \modelname{}} (ours)}& \textbf{9.3} & \textbf{27.8} & \textbf{40.5} & \textbf{79.1} & \textbf{156.8} &&2.3 & \textbf{9.5} & \textbf{15.2} & 53.6 & \textbf{80.6} &&   \textbf{18.4} & \textbf{40.7} & \textbf{52.0} & \textbf{87.5} & \textbf{198.6}     \\
\hline
\bottomrule
\end{tabular}
}
\end{table*}

\subsection{Intra-video Evidential Learning}
To mitigate the sparse supervision inherent in MIL, we employ optimal transport to establish a soft query-to-clip alignment within each video. Specifically, given clip embeddings $\{c_i\} \in \mathbb{R}^{M_c \times d}$ and corresponding  query embeddings $\{q_j\} \in \mathbb{R}^{M_q \times d}$, where $M_q$ denotes the number of queries. Let $\mathbf{S} \in \mathbb{R}^{M_c \times M_q}$ denote the clip-query similarity matrix. $\mathbf{Q}$ is the transport assignment. The objective of OT is:
\vspace{-.6em}
\begin{equation}
\vspace{-.6em}
  \begin{aligned}
 \!\!\!\!&\max _{\mathbf{Q} \in \mathcal{Q}} \quad
 \langle\mathbf{Q},~ \mathbf{S}\rangle +\varepsilon H(\mathbf{Q})\\
\!\!\!\!&\text{s.t.}  \quad\!\!\!\!
\mathcal{Q}\!\!=\!\!\left\{\mathbf{Q}\! \in \!\mathbb{R}_{+}^{M_c \times M_q}\mid \mathbf{Q} \mathbf{1}_{M_q}=\boldsymbol{\mu}, \mathbf{Q}^{\top} \mathbf{1}_{M_c}= \boldsymbol{\nu} \right\},
\vspace{-.6em}
\end{aligned}
\label{eq:ot}
\end{equation}
where $\mathbf{1}_{M_q}$ represents the vector of ones in dimension $M_q$, $\boldsymbol{\mu}\in \mathbb{R}^{M_c}$ and $\boldsymbol{\nu}\in \mathbb{R}^{M_q}$. 
$H(\mathbf{Q})$ is an entropy regularizer and $\varepsilon$ controls its smoothness. 
We obtain the optimal $\mathbf{Q}^*$ via the Sinkhorn algorithm \citep{sinkhorn}.

\noindent \textbf{Flexible Optimal Transport} \quad 
Optimal transport enforces full source-target mapping, which is ill-suited to PRVR where queries correspond to only partial clips and many clips are noisy or redundant.  
Motivated by~\cite{superglue,Nortorn}, we introduce an adaptive dustbin bucket to  filter such clips, as shown in \cref{fig:arc}(d). 
The bucket is a constant column vector $z$, set to the bottom 30\% percentile of $\mathbf{S}$ and appended to it:
\vspace{-.5em}
\begin{equation}
\bar{\mathbf{S}} = 
\begin{bmatrix}
\mathbf{S}\mid \mathbf{Z}
\end{bmatrix}, \quad
\mathbf{Z} = z \, \mathbf{1}_{{M_c} \times 1}.
\label{eq:new-s}
\vspace{-.5em}
\end{equation}
By replacing \cref{eq:ot} with \cref{eq:new-s}, we obtain the final assignment by dropping the dustbins, ${\bar{\mathbf{Q}}}^*= \bar{\mathbf{Q}}^*_{1: M_{c},1:{M_q}}$, which provides a soft one-to-many matching. 

Treating each intra-video alignment as an $M_c$-way classification, we can follow \cref{ea:evi1,eq:dir-alpha} to collect evidence and estimate the Dirichlet parameters. By viewing ${{\bar{\mathbf{Q}}}^*}$ as the calibrated label, we define the intra-video evidential learning objective $L_{\mathrm{intra}}$ based on \cref{eq:evi-loss}. For each query, $L_{\mathrm{intra}}$ encourages the accumulation of strong evidence for \emph{multiple relevant} clips and weak evidence for irrelevant ones, thereby guiding the model to acquire denser and more comprehensive temporal supervision.

\subsection{Model Optimization}
Besides the $L_{\mathrm{intra}}$ and $L_{\mathrm{inter}}$ proposed by \modelname{}, following MS-SL \cite{ms-sl},  we employ the standard similarity retrieval loss $L_{\mathrm{sim}}$ and a query diversity  $L_{\mathrm{div}}$ \cite{GMMFORMER} to enhance retrieval performance.  These two components form the basic training loss:
$L_{\mathrm{base}} = L_{\mathrm{sim}} + L_{\mathrm{div}}$. 

Therefore, the aggregate loss is defined as:
\vspace{-.4em}
\begin{equation}
L_{\mathrm{agg}} = L_{\mathrm{base}} + L_{\mathrm{inter}} + L_{\mathrm{intra}}.
\vspace{-.4em}
\end{equation}
Training follows a two-stage scheme: $L_{\mathrm{base}}$ is applied throughout, 
the first stage serves as  a warm-up, activating only $L_{\mathrm{inter}}$ without query label recalibration to learn initial text-video matching. In the second stage, all objectives are jointly optimized, including $L_{\mathrm{intra}}$ and label recalibration.
\begin{table}[t]
    \centering
    \caption{\textbf{Training and Inference Efficiency.}  Inference time measures feature extraction for 4430 videos, while retrieval time is the total time for encoding, similarity computation, and ranking over 15,753 queries on ActivityNet Captions evaluation set.}
    \resizebox{\linewidth}{!}{
        \begin{tabular}{@{}lccccc@{}}
        \toprule
        \textbf{Model} & 
        \makecell{\textbf{Train} \\ \textbf{time/epoch (ms)}} & 
        \makecell{\textbf{Model} \\ \textbf{parameters (M)}} & 
        \makecell{\textbf{Inference} \\ \textbf{time (ms)}} & 
        \makecell{\textbf{Retrieval} \\ \textbf{time (ms)}} & 
        \textbf{SumR} \\
        \midrule
        GMMFormer & 38079 & 12.85 & 5473 & 15016 & 146.0 \\
        HLFormer & 52866 & 28.43  & 6547 & 17001 & 154.9 \\
        GMMFormerV2 & 62458 & 30.79  & 6578 & 17263 & 154.9 \\
        \large \modelname{} & 52658 & 30.79  & 6556 & 17058 & 156.8 \\
        \bottomrule
        \end{tabular}
    }
    \label{tab:efficiency}
\end{table}
\begin{table*}[t]
\centering
 \caption{Extensive ablation studies of \modelname{}.  The best scores are marked in \textbf{bold}.}
\resizebox{\textwidth}{!}{
\begin{tabular}{cl ccccc c ccccc c  ccccc}
\toprule
\multirow{2}{*}{ID}&\multirow{2}{*}{\textbf{Model}}& \multicolumn{5}{c}{\textbf{ActivityNet Captions}} &  & \multicolumn{5}{c}{\textbf{Charades-STA}}&& \multicolumn{5}{c}{\textbf{TVR}} \\
\cmidrule{3-7} \cmidrule{9-13} \cmidrule{15-19}
&& \textbf{R@1} & \textbf{R@5} & \textbf{R@10} & \textbf{R@100} & \textbf{SumR} && \textbf{R@1} & \textbf{R@5} & \textbf{R@10} & \textbf{R@100} & \textbf{SumR} & & \textbf{R@1} & \textbf{R@5} & \textbf{R@10} & \textbf{R@100} & \textbf{SumR} \\
\midrule
\midrule
\rowcolor{ourscolor}
(0)& \textbf{{\large \modelname{}} (ours)}& \textbf{9.3} & \textbf{27.8} & \textbf{40.5} & \textbf{79.1} & \textbf{156.8} &&2.3 & \textbf{9.5} & \textbf{15.2} & \textbf{53.6} & \textbf{80.6} &&   \textbf{18.4} & \textbf{40.7} & \textbf{52.0} & \textbf{87.5} & \textbf{198.6}     \\
\midrule
\rowcolor{basegray}
\multicolumn{19}{l}{\scalebox{1.1}{\textit{Efficacy of Multi-scale Inter-video Evidential Learning}}}\\
(1)& Frame-scale Only & 8.6& 26.6 & 39.4 & 78.6 & 153.2&&2.4 & 8.8 & 14.6 & 52.6  & 78.4 &&  16.8& 38.4 & 50.5&87.5 & 193.1 \\
(2)& Clip-scale Only & 8.5& 26.6 & 39.8 & 78.6 & 153.6&&2.4 & 8.6 & 14.4 & 52.6 & 78.0 &&  17.1& 38.8 & 50.6&87.6 & 194.1 \\
(3)& $w/o$ Aggregation scale & 8.7& 27.1 & 39.8 & 78.7 & 154.4&&2.3 & 9.2 & 14.9 & 52.6  & 79.1 &&  17.3& 40.2 & 51.5&87.7 & 196.7 \\
\midrule
\rowcolor{basegray}
\multicolumn{19}{l}{\scalebox{1.1}{\textit{Efficacy of various Partition Fusion Strategies}}}\\
(5) & Trust Frame-scale & 8.8 & 27.3 & 40.0 & 78.8 & 155.0 &&2.3 & 8.9 & 14.6 & 52.7 & 78.5 & & 17.2& 39.4& 50.6 & 87.5  & 194.7\\
(6) &Trust Clip-scale&  8.9 & 27.0 & 40.0 & 78.9 & 154.8  &&2.3 & 8.5 & 14.3 & 53.4  & 78.5 &&17.0 & 39.8 & 51.1 & 87.5  & 195.4\\ 
(7) &$w/$ Confidence Dominance&  9.1 & 27.4 & 40.1 & 78.9 & 155.5  &&2.3 & 9.3 & 14.8 & 53.0  & 79.3 &&17.9 & 40.0 & 51.4 & 87.8  & 197.0\\ 
\midrule
\rowcolor{basegray}
\multicolumn{19}{l}{\scalebox{1.1}{\textit{Efficacy of Query Label Calibration}}}\\
(8) & $w/o$ Calibration & 8.8 & 27.4 & 39.7 & 78.7 & 154.7 &&2.3 & 8.3 & 14.2 & 52.7 & 77.4 & & 17.4 & 39.7& 51.5 & 87.7  & 196.3\\
(9) & $\mathcal{\tilde{S}}_n$ Only &  9.1 & 27.6 & 40.1 & 78.9 & 155.6  &&2.3 & 9.0 & 15.0 & 53.1  & 79.4 &&17.7 & 40.4 & 51.4 & 87.8  & 197.3\\ 
\midrule
\rowcolor{basegray}
\multicolumn{19}{l}{\scalebox{1.1}{\textit{Efficacy of Intra-video Evidential Learning}}}\\
(10) & $w/o$ $L_{\mathrm{intra}}$ 
& 8.7 & 27.1 & 39.7 & 78.6 & 154.0&&2.1 & 8.7 & 14.4 & 52.4 & 77.7 & & 17.3 & 39.2& 50.7 & 87.7  & 194.9\\
\midrule
\rowcolor{basegray}
\multicolumn{19}{l}{\scalebox{1.1}{\textit{Efficacy of Flexible Optimal Transport}}}\\
(11) & $w/$ softmax & 8.7 & 27.3 & 40.0 & 78.7 & 154.7  &&2.3 & 8.6 & 14.2 & 53.5  & 78.6 & & 17.6 & 39.8& 51.2 & 87.5  & 196.1\\
(12) &$w/$ OT&  9.0 & 27.5 & 40.3 & 78.8 & 155.6  &&2.3 & 9.0 & 15.0 & 53.1  & 79.4 &&18.2 & 40.4 & 51.5 & 87.4  & 197.5\\ 
\hline
\bottomrule
\end{tabular}
}
 \label{tab:ablation}
\end{table*}
\section{Experiments} \label{sec:experiments}
\subsection{Experimental Setup} \label{subsec:exp_setup}
\noindent \textbf{Datasets} \quad We conduct experiments on three benchmark datasets. \textbf{(i)} ActivityNet Captions \cite{krishna2017dense} contains approximately 20K YouTube videos with an average duration of 118 seconds, each annotated with an average of 3.7 moments paired with textual descriptions. \textbf{(ii)} Charades-STA \cite{gao2017tall} comprises 6,670 videos with 16,128 sentence descriptions, averaging 2.4 moments per video. \textbf{(iii)} TV Show Retrieval (TVR) \cite{lei2020tvr} consists of 21.8K video clips from six TV shows, each accompanied by five natural language descriptions. For fair comparison, we adopt the same data splits as in MS-SL \cite{ms-sl}.

\noindent \textbf{Metrics} \quad We adopt rank-based metrics for evaluation, specifically $R$@$K$($K$ = 1, 5, 10, 100). $R$@$K$ is defined as the percentage of queries where the ground-truth item appears within the top $K$ ranked results. We also report the Sum of Recalls (SumR)  to show the holistic performance. All results are presented in percentages ($\%$).

\subsection{Implementation Details}
\noindent \textbf{Data Pre-Processing} \quad For ActivityNet Captions and Charades-STA, video representations are obtained using the provided I3D features from \citet{zhang2020hierarchical} and \citet{mun2020local}, respectively. Query representations are encoded as 1,024-dimensional RoBERTa features following MS-SL \cite{ms-sl}. For TVR, we employ the 3,072-dimensional video features released by \citet{lei2020tvr}, which combine frame-level ResNet152 \cite{he2016deep} and segment-level I3D \cite{carreira2017quo} representations. Corresponding textual data is encoded as 768-dimensional RoBERTa features \cite{liu2019roberta}.

\noindent \textbf{Experimental Configurations} \quad To maintain the same model size, the frame/clip encoder adopts 8 Gaussian attention blocks following \cite{GMMFormerV2}.
The latent dimension  $d = 384$. The model is implemented in PyTorch and trained on a single NVIDIA RTX 3080 Ti GPU for 100 epochs, with the first stage lasting 20 epochs and a batch size of 128. Other details are depicted in Appendix \ref{hyperparameter}.
\begin{figure}[t]
    \centering
        \includegraphics[width=0.5\textwidth]{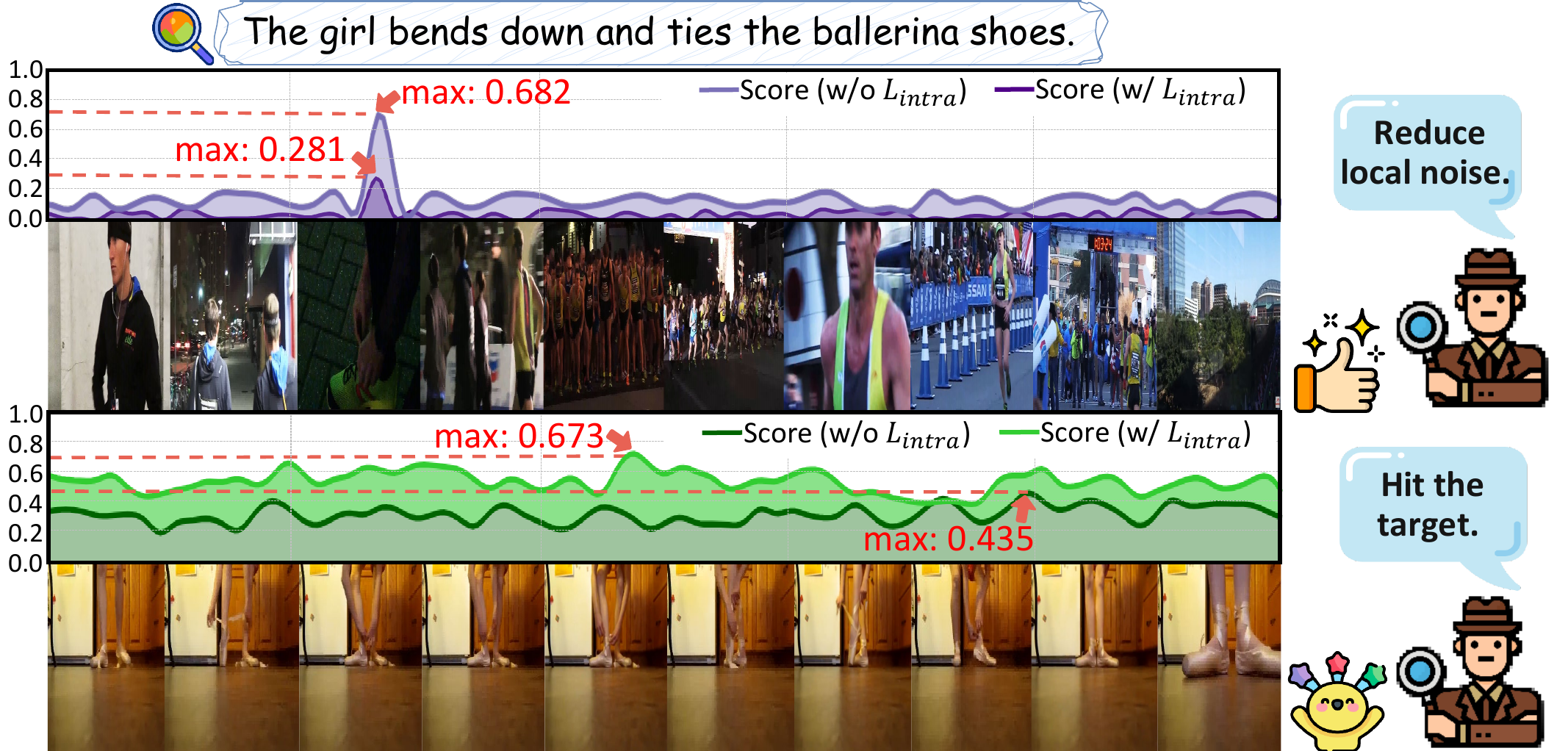}
    \caption{ Using the same videos as in \cref{fig:Intro} for comparison, removing $L_{\mathrm{intra}}$ leads to retrieval failure, as the irrelevant video receives a higher response score than the ground-truth video (0.682 vs. 0.435). In contrast, incorporating $L_{\mathrm{intra}}$ correctly ranks the ground-truth video above irrelevant ones (0.281 vs. 0.673).}
    \label{fig:vis1}
    \vspace{-.3cm}
\end{figure}
\subsection{Comparison with State-of-the arts}
\noindent \textbf{Baselines} \quad We select 15 representative PRVR baselines for comparison.
In addition, we evaluate   \modelname{} against other methods in T2VR and VCMR.
For T2VR, we compare with   RIVRL \cite{dong2022reading}, CLIP4Clip \cite{luo2022clip4clip} and Cap4Video \cite{wu2023cap4video}. For VCMR, the comparison includes
ReLoCLNet \cite{zhang2021video}, CONQUER \cite{hou2021conquer} and JSG \cite{jsg}. 

\noindent \textbf{Retrieval Performance} \quad
\cref{tab:main} summarizes the retrieval performance of different models across three benchmarks. PRVR-specific methods consistently outperform T2VR and VCMR approaches. By effectively modeling query ambiguity and providing reliable supervision for more video clips, \modelname{} achieves state-of-the-art performance. Specifically, it outperforms the strongest competitor, DreamPRVR\cite{HLFormer}, by 0.7 and 0.6 in SumR on ActivityNet Captions and Charades-STA, respectively, and further surpasses MGAKD\cite{MGAKD} by 8.1 in SumR on TVR.

\noindent \textbf{Model Efficiency} \quad
We further evaluate the model efficiency in \cref{tab:efficiency}. All results are averaged over 5 runs under identical settings. Compared with similar-scale models, \modelname{} achieves better performance with comparable efficiency, indicating that hierarchical evidential learning incurs negligible overhead and demonstrates high efficiency.

\begin{figure}[t]
    \centering
    \parbox[b]{0.22\textwidth}{
        \centering
            \includegraphics[width=\linewidth]{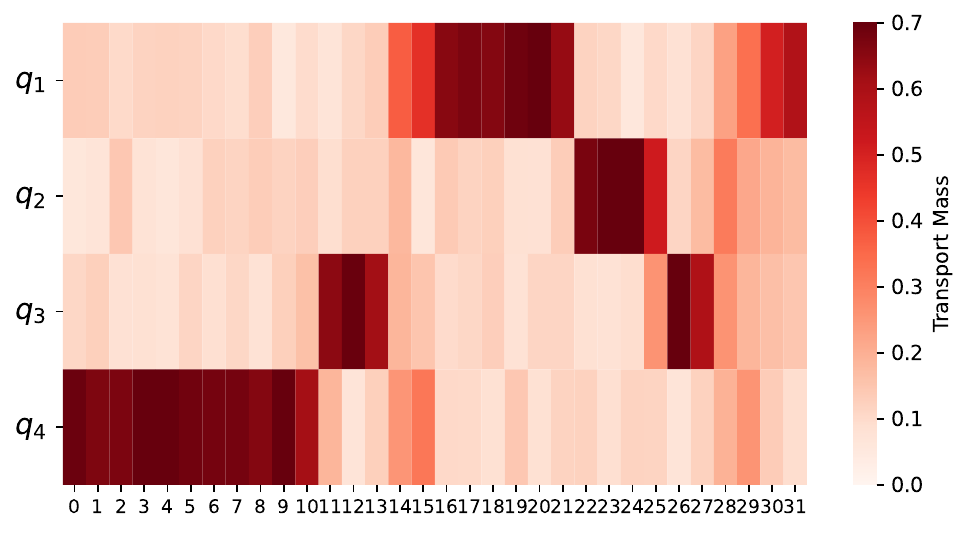}
        \\[-0.45em]
        { \scalebox{0.85}{(a) Standard OT}}
    }
    \hspace{0.02\textwidth}
    \parbox[b]{0.22\textwidth}{
        \centering
            \includegraphics[width=\linewidth]{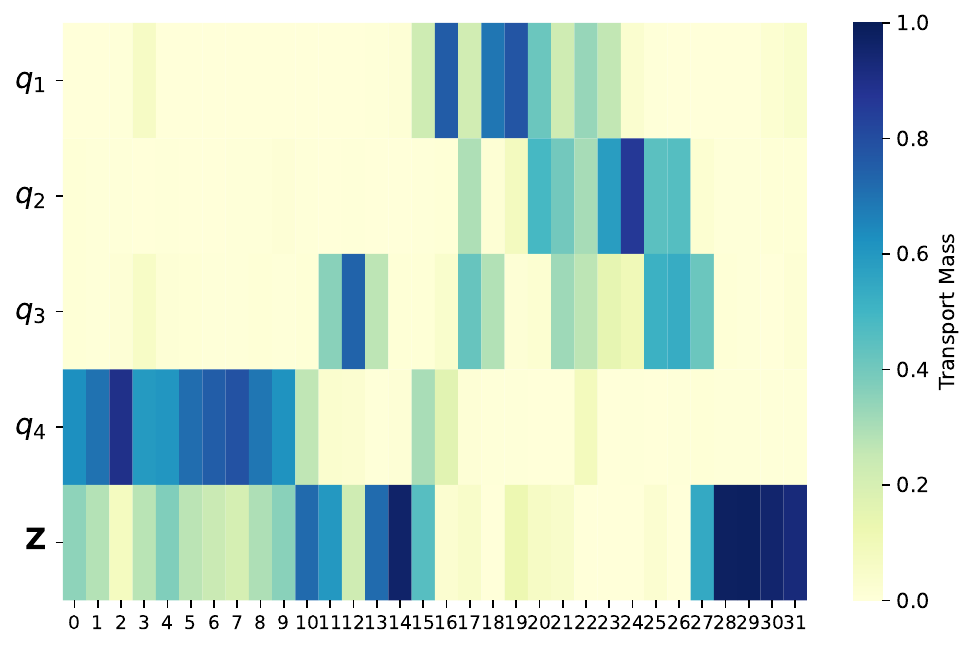}
        \\[-0.25em]
        {\scalebox{0.85}{(b) Flexible OT}}
    }
    \caption{Comparison of transport mass alignment between OT and FOT. $Z$ denotes the dustbin bucket, which is observed in the visualization to absorb the mass from the last several clips.}
    \label{fig:pot}
\end{figure}
\subsection{Model Analyses}
\noindent \textbf{Multi-scale Inter-video Evidential Learning} \quad
We perform ablations to evaluate the multi-scale branch by (i) using only the frame- or clip-scale branch, training only with $L^{f}_{U}$ or $L^{c}_{U}$ and (ii) removing the Dynamic Co-Evidential Aggregator Module, \textit{i.e.}, training $w/o$ $L^{o}_{U}$. As shown in \cref{tab:ablation}, single-branch learning and the absence of aggregation both lead to inferior performance, showing the necessity of multi-granularity inter-video evidential learning. It also suggests that cross-branch evidence aggregation could yield an overall uncertainty estimate that promotes optimization.

\noindent \textbf{Effects of Various Partition Fusion Strategies} \quad
We compare three fusion strategies: (i) trusting single-branch, (ii) UDPF (default), which addresses inter-branch conflicts by favoring hypotheses with higher uncertainty and (iii) Confidence Dominance, which adopts the opposite fusion principle. As shown in \cref{tab:ablation}, fusing dual-branch predictions yields more reliable assignments and improved accuracy. Compared to the conservative UDPF strategy, the more aggressive Confidence Dominance may induce over-confident yet incorrect associations, resulting in inferior performance.

\noindent \textbf{Efficacy of Query Label Calibration} \quad
Comparing ID (0), ID (8), and ID (9) in \cref{tab:ablation} shows that uncalibrated baseline performs worst. We attribute this to its neglect of inherent semantic ambiguity, which indiscriminately pushes unpaired videos, over-penalizing semantically related samples and introducing misleading supervision. Meanwhile, the initial polysemous set $\tilde{\mathcal{S}}_n$ fails to capture polysemous queries present in $\tilde{\mathcal{S}}_p$, leading to suboptimal performance. These results further validate the importance of modeling query ambiguity and fine-grained query identification.

\noindent \textbf{Efficacy of Intra-video Evidential Learning} \quad
 ID (10) in \cref{tab:ablation} shows that removing $L_{\mathrm{intra}}$ causes a notable drop in retrieval accuracy. We further visualize the temporal attention scores between queries and videos in \cref{fig:vis1}. With $L_{\mathrm{intra}}$, the model exhibits suppressed responses to incorrect videos and reduced spurious local peaks, while producing higher similarity scores on relevant ones. By establishing soft one-to-many query-clip assignments and aggregating evidence over multiple relevant clips, it provides denser temporal supervision, enabling the model to mitigate spurious local noise and improve retrieval reliability and accuracy.

\noindent \textbf{Effect of Flexible Optimal Transport} \quad
To evaluate the effect of Flexible Optimal Transport, we compare three variants: (i) softmax-based assignment, (ii) optimal transport (OT) and (iii) Flexible Optimal Transport (FOT, default). As shown in \cref{tab:ablation}, softmax performs worst, while OT improves performance but fails to capture partial relevance in PRVR.
In contrast, as illustrated in \cref{fig:pot}, FOT with the proposed dustbin bucket absorbs noisy clips and mitigates irrelevant misalignments, achieving the best performance.

\begin{figure}[t]
    \centering
        \includegraphics[width=0.48\textwidth]{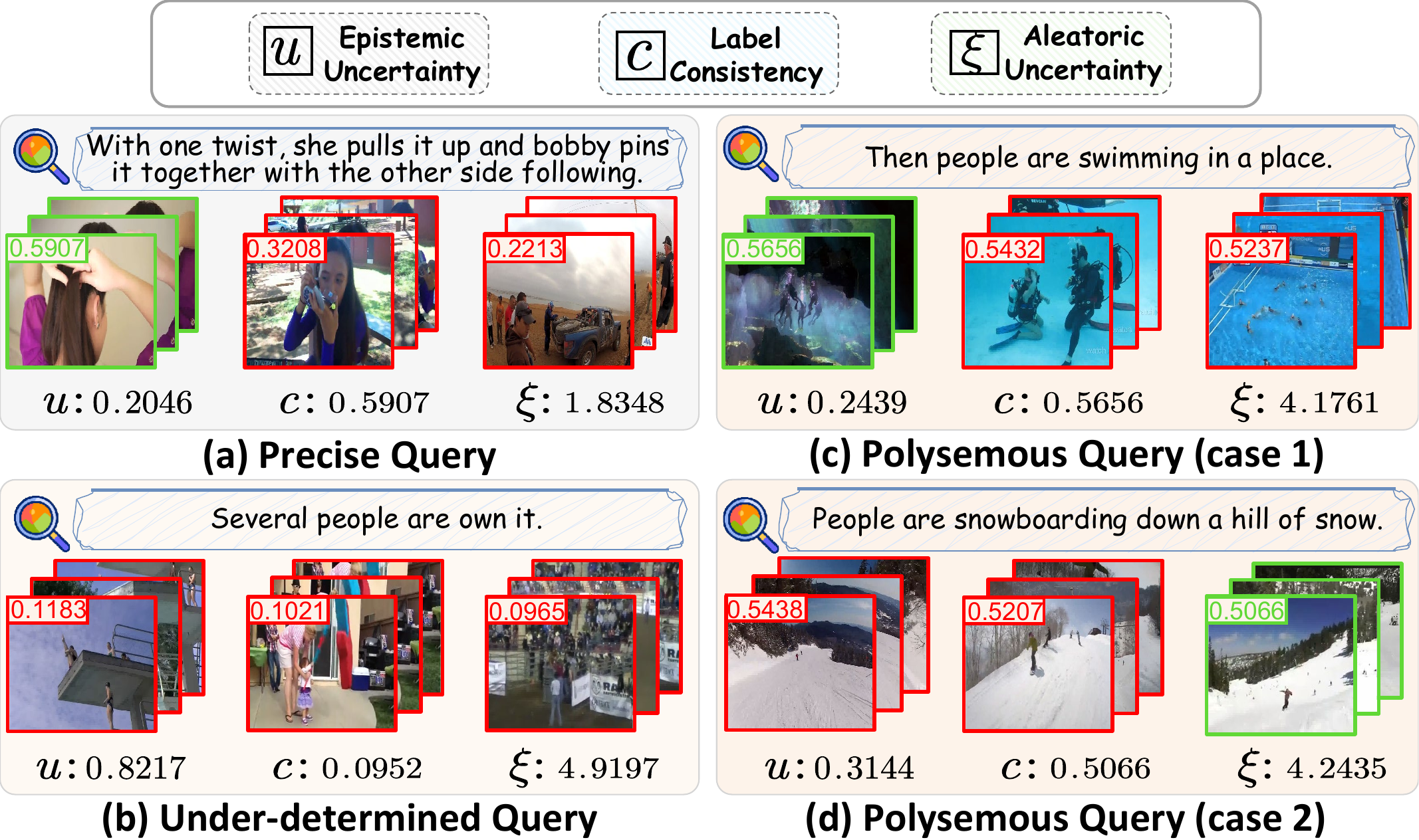}
    \caption{Qualitative examples of query identification. For each query, the top-3 retrieved videos are shown, with correct matches outlined in green and incorrect matches in red. Similarity scores are displayed at the upper-left corner of each video.}
    \label{fig:query-choose}
\end{figure}

\noindent \textbf{Qualitative Query Identification Results} \quad
\cref{fig:query-choose} presents representative examples demonstrating the effectiveness of our identification algorithm. Under-determined queries are clearly separated by $u$ and $c$, while precise and polysemous queries are mainly distinguished by $\xi$, consistent with our categorization criteria. We observe that, compared with relatively short polysemous queries without explicit referential cues, precise queries are generally longer and more spefic. Additionally, under-determined queries may exhibit grammatical or spelling errors (\textit{e.g.}, misspellings such as own), which hinder semantic understanding. These observations also align well with real-world scenes and further support the validity of our approach.

\section{Conclusions}
\label{sec:conclusion}
In this paper, we presented \modelname{}, a hierarchical evidential learning framework designed to address the inherent uncertainty and supervisory sparsity in PRVR.
To tackle query ambiguity at the inter-video level, \modelname{} models cross-modal similarity as evidence via Dirichlet distributions, employing a three-fold principle to identify query types and guide adaptive label calibration.
Complementarily, at the intra-video level, we introduced flexible optimal transport with an adaptive dustbin to enable dense temporal alignment, effectively filtering out noise while accumulating reliable supervision signals, yielding superior  performance.
\section*{Limitations}  For fair comparison, we follow prior works by using pre-trained models (e.g., ResNet) for feature extraction, which may limit retrieval performance. In future work, we plan to adopt more advanced encoders (e.g., CLIP) and enable end-to-end training. Consistent with most retrieval methods, \modelname{} requires access to the full video, making it less suitable for online streaming video retrieval scenarios.
\section*{Acknowledgements}
We sincerely thank the reviewers and chairs for their efforts and constructive suggestions, which have helped us improve the manuscript. This work is supported in part by the National Natural Science Foundation of China under grants 624B2088, 62571298, 62536003, 62521006, and in part by the project of Peng Cheng Laboratory (PCL2025A14).

\section*{Impact Statement}
This paper introduces \modelname{}, a framework for PRVR that explicitly models semantic uncertainty and query ambiguity to enhance the reliability of AI systems. Beyond improving retrieval accuracy, a primary societal benefit of our work is the shift from deterministic ``black-box'' predictions to interpretable, uncertainty-aware retrieval. By distinguishing between confident matches and low-evidence guesses, our approach fosters trust in downstream applications and allows for human-in-the-loop verification. While advanced retrieval technologies carry potential dual-use risks, such as in automated surveillance, \modelname{} actively mitigates the harm of ``false positives'' through its intra-video evidential learning and adaptive dustbin mechanism, which filter out spurious correlations and noise. Furthermore, by explicitly categorizing polysemous and under-determined queries, our framework offers a methodological pathway to diagnose and analyze dataset biases rather than blindly overfitting to them. We believe that by prioritizing uncertainty quantification and the suppression of unreliable signals, this work steers the field toward more robust, transparent, and responsible multimodal video understanding.

\bibliography{main}
\bibliographystyle{icml2026}

\newpage
\appendix
\onecolumn
\section{More Experiments}
\subsection{Hyper-Parameters} 
\label{hyperparameter}
We adopt the hyperparameter settings from HLFormer \cite{HLFormer} and GMMFormerV2 \cite{GMMFormerV2}. The numbers of clips and frames are set to $M_c=32$ and $M_f=128$, respectively. The maximum query length $N_q$ is fixed to 64 for ActivityNet Captions and 30 for TVR and Charades-STA. All remaining settings, including $L_{\text{sim}}$ and $L_{\text{div}}$, follow \cite{HLFormer}. Detailed implementation can be provided in the submitted code \href{https://github.com/lijun2005/ICML26-Holmes}{ICML26-Holmes}.

\subsection{Parameter Analysis}
We performed a comprehensive hyperparameter analysis on the TVR dataset to investigate the effects of the temperature $\tau$ in \cref{ea:evi1}, the threshold $\beta$ in \cref{eq:threshold}, and the dustbin bucket ratio $z$ in \cref{eq:new-s}, as depicted in \cref{fig:parameter_analysis}. The results indicate that our model maintains relatively stable performance when $\tau$ is within [0.06, 0.2], $\beta$ within [0.2, 0.5], and $z$ within [0.2, 0.4], demonstrating that the proposed approach is largely insensitive to hyperparameter variations and exhibits strong robustness.

\begin{figure*}[h]
    \centering
    \vspace{-0.8em}
    \parbox[b]{0.25\textwidth}{
        \centering
            \includegraphics[width=\linewidth]{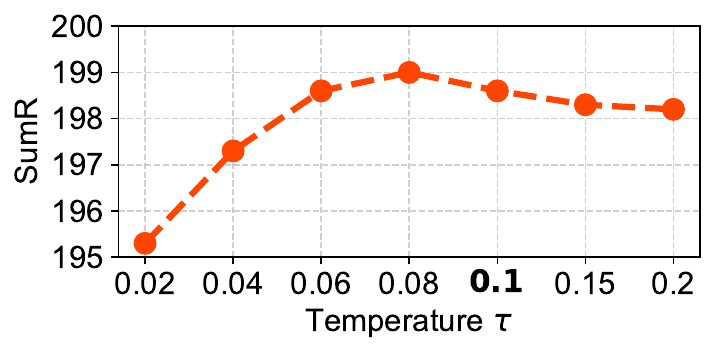}
    }
    \hspace{0.02\textwidth}
    \parbox[b]{0.25\textwidth}{
        \centering
            \includegraphics[width=\linewidth]{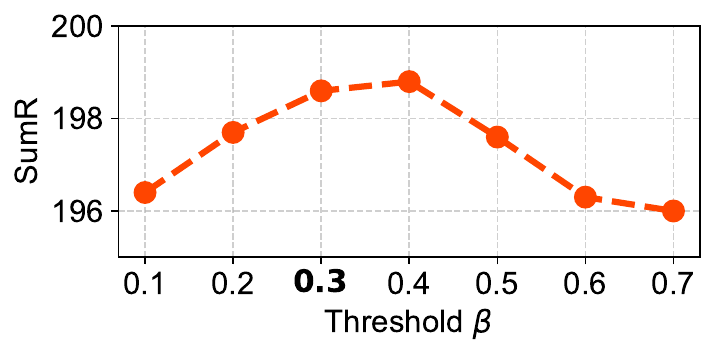}
    }
    \parbox[b]{0.25\textwidth}{
        \centering
            \includegraphics[width=\linewidth]{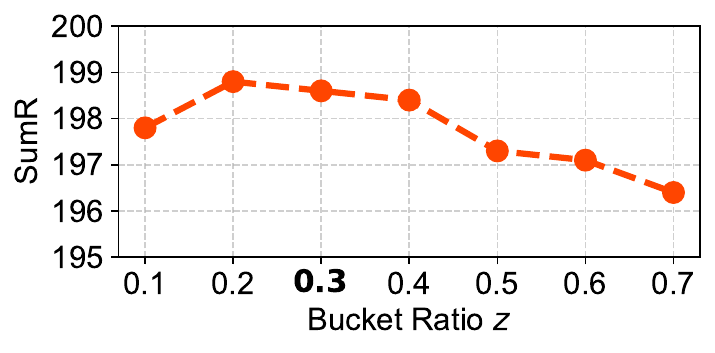}
    }
    \hspace{0.02\textwidth}
    \vspace{-0.6em}
    \caption{The parameter analysis of the temperature $\tau$ in \cref{ea:evi1}, the threshold $\beta$ in \cref{eq:threshold}, and the dustbin bucket ratio $z$ in \cref{eq:new-s} on the TVR dataset, with the default settings highlighted in bold.
    }
    \label{fig:parameter_analysis}
\end{figure*}
\section{More Details on Method}
\subsection{Basic Training Objectives}
 Following  existing works\cite{ms-sl,GMMFORMER}, we adopt triplet loss \cite{dong2021dual, faghri2017vse++} $L^{trip}$ and InfoNCE loss \cite{miech2020end, zhang2021video} $L^{nce}$, query diverse loss \cite{GMMFORMER} $L_{div}$ that are widely used in PRVR. A text-video pair is considered positive if the video contains a moment relevant to the text; otherwise, it is regarded as negative.
 Given a positive text-video pair $({T}, {V})$, the triplet ranking loss over the mini-batch $\mathcal{B}$ is formulated as:
\begin{gather}
L^{trip} = \frac{1}{n} \sum_{({T},{V}) \in {B}} \{\text{max}(0, m + s({T}^-, {V}) - s({T}, {V})) \nonumber 
    + \text{max}(0, m + s({T}, {V}^-) - s({T},{V}))\},
\end{gather}
where $m$ is a margin constant. ${T}^-$ and ${V}^-$ indicate a negative text for ${V}$ and a negative video for ${T}$, respectively. The similarity score $s(,)$ is obtained by \cref{eq:simscore} and \cref{eq:sim}.

 The infoNCE loss  is computed as:
\begin{gather}
L^{nce} = -\frac{1}{n} \sum_{({T},{V}) \in \mathcal{B}} \left\{\text{log}\left(\frac{s({T},{V})}{s({T},{V}) + \sum\nolimits_{{T}_i^{-} \in \mathcal{N}_{T} }s({T}_i^-, {V})}\right) \nonumber
+ \text{log}\left(\frac{s({T},{V})}{s({T},{V}) + \sum\nolimits_{{V}_i^{-} \in \mathcal{N}_{V} }s({T}, {V}_i^-)}\right) \right\},
\end{gather}
where $ \mathcal{N}_{T} $ and $ \mathcal{N}_{V} $ represent the negative texts and videos of $ {V} $ and $ {T} $ within the mini-batch $ \mathcal{B} $, respectively.

Finally , $L_{sim}$ is defined as:
\begin{equation}
L_{sim} = L_{c}^{trip} + L_{f}^{trip} + \lambda_c L_{c}^{nce} + \lambda_f L_{f}^{nce},
\end{equation}
where  $f$ and $c$ mark the objectives for the frame-level branch and the  clip-level branch, respectively.
 $\lambda_c$ and $\lambda_f$ are hyper-parameters to balance the contributions of InfoNCE objectives.

Given a collection of text queries $T$ in the mini-batch $\mathcal{B}$, the query diverse loss is defined as:
\vspace{-.5em}
\begin{equation}
L_{div} = \frac{1}{n} \sum_{\boldsymbol q_i,\boldsymbol q_j \in {T}} \mathds{1}_{\boldsymbol q_i, \boldsymbol q_j} \log\left(1 + e^{\alpha (cos(\boldsymbol q_i, \boldsymbol q_j) + \delta)}\right),
\vspace{-.5em}
\end{equation}
where $ \delta > 0 $ denotes the margin, $ \alpha > 0 $ is a scaling factor, and $ \mathds{1}_{\boldsymbol q_i, \boldsymbol q_j} \in \{0,1\} $ represents an indicator function,  $ \mathds{1}_{\boldsymbol q_i, \boldsymbol q_j} = 1 $ when  $ \boldsymbol q_i $ and $ \boldsymbol q_j $ correspond to the same video.
\subsection{Detailed proofs for \cref{theorem: wrong uncertainty}}
\label{appendix:theorem3.2}

\begin{proof}
We prove the theorem by providing a counter-example.
Let $y_i$ be the ground-truth label corresponding to index $t$ (i.e., $y_{it}=1$).
Let $\boldsymbol{e}_i$ be an evidence vector constructed such that all evidence is assigned to an incorrect video $k$ ($k \neq t$):
\begin{equation}
    e_{ij} = 
    \begin{cases}
    \lambda, & \text{if } j = k, \\
    0, & \text{otherwise},
    \end{cases}
\end{equation}
where $\lambda \gg 0$ is a large positive constant.
According to Eq. (\ref{eq:dir-alpha}) and Eq. (\ref{eq:dir-alpha2}), the Dirichlet strength is $S_i = \lambda + K$.
Consequently, the epistemic uncertainty is $\displaystyle u_i = \frac{K}{\lambda + K}$. By choosing a sufficiently large $\lambda$, we can make $u_i$ arbitrarily close to 0 (low uncertainty).

However, the belief mass $b_{ij}$ is maximized at index $k$ because $\alpha_{ik} > \alpha_{ij}$ for all $j \neq k$.
Since $k \neq t$, the highest belief is not assigned to the annotated label.
This implies that a low epistemic uncertainty indicates that the model has collected a large amount of evidence, but it does not guarantee that this evidence points to the correct video.
\end{proof}

\subsection{The derivation of aleatoric uncertainty in \cref{eq:aleatoric-uncertainty}}
\label{appendix:aleatoric-uncertainty}
Following standard formulations in evidential learning \citep{rule, r-edl}, let $\mathbf{p} = [p_1, \dots, p_K]^\top$ be the probability vector for the $i$-th sample, defined on the $K$-dimensional unit simplex $\mathcal{S}^K$. We assume $\mathbf{p}$ follows a Dirichlet distribution parameterized by the concentration vector $\boldsymbol{\alpha}_i = [\alpha_{i1}, \dots, \alpha_{iK}]^\top$. The probability density function (PDF) is defined as:
\begin{equation}
    \text{Dir}(\mathbf{p}; \boldsymbol{\alpha}_i) = 
    \begin{cases} 
        \displaystyle\frac{1}{B(\boldsymbol{\alpha}_i)} \prod_{k=1}^{K} p_k^{\alpha_{ik}-1}, & \text{for } \mathbf{p} \in \mathcal{S}^K, \\
        0, & \text{otherwise},
    \end{cases}
    \label{eq:dirichlet_pdf}
\end{equation}
where $B(\boldsymbol{\alpha}_i) = \frac{\prod_{k=1}^K \Gamma(\alpha_{ik})}{\Gamma(S_i)}$ is the multivariate beta function, and $S_i = \sum_{k=1}^K \alpha_{ik}$ is the strength of the distribution. The expected entropy (aleatoric uncertainty) of the random variable $\mathbf{p}$ under this distribution is derived as follows:
\begin{equation}
\begin{split}
\mathbb{E}_{\mathbf{p} \sim \text{Dir}(\boldsymbol{\alpha}_i)} [H(\mathbf{p})] 
    &= \int_{\mathcal{S}^K} \text{Dir}(\mathbf{p}; \boldsymbol{\alpha}_i) \left( - \sum_{k=1}^{K} p_k \ln p_k \right) d\mathbf{p} \nonumber \\
    &= - \sum_{k=1}^{K} \int_{\mathcal{S}^K} \text{Dir}(\mathbf{p}; \boldsymbol{\alpha}_i) (p_k \ln p_k) d\mathbf{p} \nonumber \\
    &= - \sum_{k=1}^{K} \int_{\mathcal{S}^K} \frac{\Gamma(S_i)}{\prod_{j=1}^{K} \Gamma(\alpha_{ij})} \prod_{j=1}^{K} p_j^{\alpha_{ij}-1} (p_k \ln p_k) d\mathbf{p} \nonumber \\
    &= - \sum_{k=1}^{K} \int_{\mathcal{S}^K} \frac{\alpha_{ik}}{S_i} \frac{\Gamma(S_i + 1)}{\Gamma(\alpha_{ik} + 1) \prod_{j \neq k} \Gamma(\alpha_{ij})} p_k^{\alpha_{ik}} \prod_{j \neq k} p_j^{\alpha_{ij}-1} \ln p_k d\mathbf{p} \nonumber \\
    &= - \sum_{k=1}^{K} \frac{\alpha_{ik}}{S_i} \int_{\mathcal{S}^K} \text{Dir}(\mathbf{p}; \boldsymbol{\alpha}_i + \mathbf{1}_k) \ln p_k d\mathbf{p} \nonumber \\
    &= - \sum_{k=1}^{K} \frac{\alpha_{ik}}{S_i} \mathbb{E}_{\mathbf{p} \sim \text{Dir}(\boldsymbol{\alpha}_i + \mathbf{1}_k)} [\ln p_k] \nonumber \\
    &= - \sum_{k=1}^{K} \frac{\alpha_{ik}}{S_i} (\psi(\alpha_{ik} + 1) - \psi(S_i + 1)) \nonumber \\
    &= \sum_{k=1}^{K} \frac{\alpha_{ik}}{S_i} (\psi(S_i + 1) - \psi(\alpha_{ik} + 1)),
\end{split}
\end{equation}
where the term $\mathbf{1}_k$ represents a one-hot vector with the $k$-th element set to 1 and all others set to 0. The derivation primarily relies on the recursive property of the Gamma function, $\Gamma(z+1) = z\Gamma(z)$. This property allows us to rewrite the probability density function by extracting the factor $\displaystyle\frac{\alpha_{ik}}{S_i}$, thereby transforming the original integral into an expectation over a new Dirichlet distribution with parameters shifted by $\mathbf{1}_k$ (i.e., $\boldsymbol{\alpha}_i + \mathbf{1}_k$).
\subsection{The derivation of loss function in \cref{eq:evi-loss}}
\label{appendix:equation17}
In this section, we provide the detailed derivation of the evidential learning objective presented in the main paper. 

Our method aims to guide the predicted probability $\boldsymbol{p}_i$ to align with the calibrated label $\hat{\boldsymbol{y}}_i$. Since $\boldsymbol{p}_i$ is a random variable, we formulate the uncertainty-aware learning objective as the expected sum of squared errors between $\hat{\boldsymbol{y}}_i$ and $\boldsymbol{p}_i$. The derivation is as follows:
\begin{equation}
\setlength{\abovedisplayskip}{2pt}
\setlength{\belowdisplayskip}{2pt}
    \begin{split}
        {L}_U(\boldsymbol{\alpha}_i, \hat{\boldsymbol{y}}_i)
        &=\int\left\|\hat{\boldsymbol y}_i-\boldsymbol{p}_i\right\|_2^2 \frac{1}{B\left(\boldsymbol{\alpha}_i\right)} \prod_{j=1}^{K} p_{i j}^{\alpha_{i j}-1} d \boldsymbol{p}_i \\
        &= \int \|\hat{\boldsymbol{y}}_i - \boldsymbol{p}_i\|_2^2 \, \text{Dir}(\boldsymbol{p}_i; \boldsymbol{\alpha}_i) \, d\boldsymbol{p}_i \\
        &= \sum_{j=1}^K \mathbb{E}_{\boldsymbol{p}_i \sim \text{Dir}(\boldsymbol{\alpha}_i)} \left[ (\hat{y}_{ij} - p_{ij})^2 \right] \\
        &= \sum_{j=1}^K \left[ \left( \hat{y}_{ij} - \mathbb{E}[p_{ij}] \right)^2 + \text{Var}(p_{ij}) \right] \\
        &= \sum_{j=1}^K \left( \hat{y}_{ij} - \frac{\alpha_{ij}}{S_i} \right)^2 + \frac{\alpha_{ij} (S_i - \alpha_{ij})}{S_i^2 (S_i + 1)},
    \end{split}
\end{equation}
where $S_i = \sum_{j=1}^K \alpha_{ij}$. The third step utilizes the standard bias-variance decomposition, $\mathbb{E}[(X-c)^2] = (c-\mathbb{E}[X])^2 + \text{Var}(X)$, and the final step substitutes the mean and variance of the Dirichlet distribution: $\mathbb{E}[p_{ij}] = \alpha_{ij}/S_i$ and $\text{Var}(p_{ij}) = \alpha_{ij}(S_i-\alpha_{ij}) / [S_i^2(S_i+1)]$.
\subsection{Derivation of the Sinkhorn Algorithm}
\label{sec:sinkhorn_derivation}
In this section, we provide a detailed derivation of the Sinkhorn algorithm used for calculating the optimal transport (\cref{eq:ot}). 
Consider a similarity matrix $\mathbf{S} \in \mathbb{R}^{n \times m}$, where $[\mathbf{S}]_{i,j}$ represents the similarity score between a video clip $\mathbf{c}_i$ and a query $\mathbf{q}_j$. The goal of optimal transport is to find a transport plan $\mathbf{Q}$ that maximizes the total similarity under marginal constraints. The primal problem is defined as:
\begin{equation}
\begin{aligned}
    \max_{\mathbf{Q} \in \mathcal{Q}} \quad & \langle \mathbf{Q}, \mathbf{S} \rangle = \text{tr}(\mathbf{Q}^\top \mathbf{S}) = \sum_{i=1}^{n} \sum_{j=1}^{m} [\mathbf{Q}]_{i,j} \cdot [\mathbf{S}]_{i,j} \\
    \text{s.t.} \quad & \mathcal{Q} = \left\{ \mathbf{Q} \in \mathbb{R}_+^{n \times m} \mid \mathbf{Q}\mathbf{1}_m = \boldsymbol{\mu}, \, \mathbf{Q}^\top \mathbf{1}_n = \boldsymbol{\nu} \right\},
\end{aligned}
\label{eq:ot_primal}
\end{equation}
where $\boldsymbol{\mu} \in \mathbb{R}^n$ and $\boldsymbol{\nu} \in \mathbb{R}^m$ are probability vectors representing the mass distribution of video clips and queries, respectively. For uniform distributions, we set $\boldsymbol{\mu} = \frac{1}{n}\mathbf{1}_n$ and $\boldsymbol{\nu} = \frac{1}{m}\mathbf{1}_m$.

Standard linear programming solvers for Eq.~\eqref{eq:ot_primal} typically require $O(n^3 \log n)$ time complexity, which is computationally prohibitive for large-scale datasets. To address this, Cuturi \citep{sinkhorn} proposed an entropy-regularized formulation that admits a fast approximate solution:
\begin{equation}
    \max_{\mathbf{Q} \in \mathcal{Q}} \quad \langle \mathbf{Q}, \mathbf{S} \rangle + \varepsilon H(\mathbf{Q}),
    \label{eq:ot_regularized}
\end{equation}
where $H(\mathbf{Q}) = - \sum_{i,j} [\mathbf{Q}]_{i,j} \ln([\mathbf{Q}]_{i,j})$ is the entropy regularization term and $\varepsilon > 0$ is the regularization coefficient. This convex objective function allows us to utilize the Lagrange multiplier method. The Lagrangian $\mathcal{L}(\mathbf{Q}, \boldsymbol{u}, \boldsymbol{v})$ with dual multipliers $\boldsymbol{u} \in \mathbb{R}^n$ and $\boldsymbol{v} \in \mathbb{R}^m$ is given by:
\begin{equation}
    \mathcal{L}(\mathbf{Q}, \boldsymbol{u}, \boldsymbol{v}) = \langle \mathbf{Q}, \mathbf{S} \rangle + \varepsilon H(\mathbf{Q}) + \boldsymbol{u}^\top (\mathbf{Q}\mathbf{1}_m - \boldsymbol{\mu}) + \boldsymbol{v}^\top (\mathbf{Q}^\top \mathbf{1}_n - \boldsymbol{\nu}).
    \label{eq:lagrangian}
\end{equation}
Taking the partial derivative of Eq.~\eqref{eq:lagrangian} with respect to each entry $[\mathbf{Q}]_{i,j}$ and setting it to zero (KKT condition):
\begin{equation}
    \frac{\partial \mathcal{L}}{\partial [\mathbf{Q}]_{i,j}} = [\mathbf{S}]_{i,j} - \varepsilon (\ln([\mathbf{Q}]_{i,j}) + 1) + u_i + v_j = 0.
\end{equation}
Solving for $[\mathbf{Q}]_{i,j}$, we obtain:
\begin{equation}
    [\mathbf{Q}]_{i,j} = \exp\left( \frac{[\mathbf{S}]_{i,j} + u_i + v_j}{\varepsilon} - 1 \right) = e^{-\frac{1}{2} + \frac{u_i}{\varepsilon}} \cdot e^{\frac{[\mathbf{S}]_{i,j}}{\varepsilon}} \cdot e^{-\frac{1}{2} + \frac{v_j}{\varepsilon}}.
\end{equation}
This factorization suggests that the optimal solution $\mathbf{Q}^*$ can be expressed in terms of two scaling vectors, $\boldsymbol{\kappa}_1 \in \mathbb{R}^n$ and $\boldsymbol{\kappa}_2 \in \mathbb{R}^m$. By defining $\displaystyle[\boldsymbol{\kappa}_1]_i = \exp\left(-{1}/{2} + {u_i}/{\varepsilon}\right)$ and $\displaystyle[\boldsymbol{\kappa}_2]_j = \exp\left(-{1}/{2} + {v_j}/{\varepsilon}\right)$, the optimal transport matrix takes the form of a normalized exponential matrix:
\begin{equation}
    \mathbf{Q}^* = \text{Diag}(\boldsymbol{\kappa}_1) \exp(\mathbf{S}/\varepsilon) \text{Diag}(\boldsymbol{\kappa}_2).
    \label{eq:optimal_Q}
\end{equation}
Since $\mathbf{Q}^*$ must satisfy the marginal constraints in Eq.~\eqref{eq:ot_primal}, we have:
\begin{equation}
    \mathbf{Q}^*\mathbf{1}_m = \boldsymbol{\mu} \quad \text{and} \quad \mathbf{Q}^{*\top}\mathbf{1}_n = \boldsymbol{\nu}.
\end{equation}
Substituting Eq.~\eqref{eq:optimal_Q} into these constraints leads to the Sinkhorn-Knopp fixed-point iteration updates:
\begin{equation}
    \boldsymbol{\kappa}_1 \leftarrow \boldsymbol{\mu} \oslash (\exp(\mathbf{S}/\varepsilon) \boldsymbol{\kappa}_2), \quad 
    \boldsymbol{\kappa}_2 \leftarrow \boldsymbol{\nu} \oslash (\exp(\mathbf{S}^\top/\varepsilon) \boldsymbol{\kappa}_1),
\end{equation}
where $\oslash$ denotes element-wise division. Empirically, a few iterations (e.g., 50 steps) are sufficient to converge to a satisfactory transport plan $\mathbf{Q}^*$. The final optimal transport distance is then computed as $\langle \mathbf{Q}^*, \mathbf{S} \rangle$.
\subsection{Further details of Our Architecture}
We replace the standard self-attention in the Transformer block with Gaussian attention \cite{GMMFORMER}, yielding a Gaussian block.
Multiple Gaussian blocks ($N_g$) operate in parallel, and their outputs are aggregated via MAIM \cite{HLFormer} to construct the frame-/clip-level encoder.
For notational convenience, we unify the frame and clip embeddings as $\bmV_o$, since they are processed in an identical manner.
Given $\bmV_o$, the Gaussian attention is defined as:
\vspace{-.6em}
\begin{equation}\label{eq:euclidean-attention}
  \!\! \!\! \text{GA}({\bmV_o})\! =\!\text{softmax}\!\left(\!\mathcal{M}_{\sigma}^{g} \odot \frac{ {\bmV_o} W^q ( {\bmV_o} W^k)^{\top}}{\sqrt{d_h}}\!\right)\!{\bmV_o} W^v,
\vspace{-.6em}
\end{equation}
where $\displaystyle\mathcal{M}_{\sigma}^{g}(i, j) = \frac{1}{2\pi} e^{-\frac{(j-i)^2}{\sigma^2}}$ is a Gaussian kernel with variance $\sigma^2$. Varying $\sigma$ enables modeling feature interactions at different temporal scales, yielding multi-receptive-field representations. $W^q$, $W^k$ and $W^v$ denote linear projections. $d_h$ is the attention dimension and $\odot$ indicates element-wise multiplication.
\subsection{Further discussion  of related work}
Existing evidential learning methods have been applied to noisy cross‑modal retrieval and text‑to‑video retrieval, In noisy settings, prior works\cite{decl,ucpm} leverage epistemic uncertainty derived from Dirichlet parameters to identify instance-level noisy pairs (i.e., clean vs. noisy). In text-video retrieval, ~\cite{duq} integrates probabilistic embeddings with epistemic uncertainty only as a regularization signal. They do not explicitly model query-level semantic heterogeneity and also fail to address the unique challenges of PRVR.

Holmes introduces a hierarchical evidential framework specifically designed for PRVR:
\begin{itemize}
    \item Inter-video: PRVR queries exhibit fine-grained heterogeneity beyond a simple clean/noisy dichotomy. We propose a three-fold principle that goes beyond epistemic uncertainty by incorporating label consistency and aleatoric uncertainty for query diagnosis, enabling cross-scale conflict-aware fusion and query-adaptive label calibration.
    \item  Intra-video: PRVR relies on MIL to handle partial relevance, which induces sparse supervision. We employ FOT to provide denser supervision, generating reliable intra‑video evidence to support inter‑video learning.
\end{itemize}

\end{document}